\definecolor{pref}{HTML}{EFEFEF}
\definecolor{metrics}{HTML}{EFEFEF}
\definecolor{nd}{HTML}{80b918}
\definecolor{d}{HTML}{ef476f}
\newtheorem{assumption}{Assumption}[section]
\newtheorem{definition}{Definition}[section]
\begin{document}
\ifodd 0
\newcommand{\rev}[1]{\textcolor{blue}{#1}}
\newcommand{\revw}[1]{\textcolor{red}{#1}}
\newcommand{\revg}[1]{\textcolor{cyan}{#1}}
\newcommand{\revh}[1]{#1}
\newcommand{\com}[1]{\textbf{\color{red} \left(Comment: #1\right) }}
\newcommand{\comg}[1]{\textbf{\color{blue} \left(COMMENT: #1\right)}}
\newcommand{\response}[1]{\textbf{\color{blue} \left(RESPONSE: #1\right)}}
\else
\newcommand{\rev}[1]{#1}
\newcommand{\revh}[1]{#1}
\newcommand{\revw}[1]{#1}
\newcommand{\com}[1]{}
\newcommand{\comg}[1]{}
\newcommand{\response}[1]{}
\fi
\title{PraFFL: A Preference-Aware Scheme in Fair Federated Learning}

\author{Rongguang Ye}
\email{yerg2023@mail.sustech.edu.cn}
\affiliation{%
  \institution{Southern University of Science and Technology}
  \city{Shenzhen}
  \country{China}
}
\author{Wei-Bin Kou}
\email{wbkou@connect.hku.hk}
\affiliation{%
  \institution{The University of
Hong Kong}
  \city{Hong Kong}
  \country{China}
}
\author{Ming Tang}
\authornote{Corresponding author.}
\email{tangm3@sustech.edu.cn}

\affiliation{%
  \institution{Southern University of Science and Technology}
  \city{Shenzhen}
  \country{China}
}


\begin{abstract}
Fairness in federated learning has emerged as a critical concern, aiming to develop an unbiased model among groups (e.g., male or female) of diverse sensitive features. However, there is a trade-off between model performance and fairness, i.e., improving model fairness will decrease model performance. Existing approaches have characterized such a trade-off by introducing hyperparameters to quantify client's preferences for model fairness and model performance. Nevertheless, these approaches are limited to scenarios where each client has only a single pre-defined preference, and fail to work in practical systems where each client generally has multiple preferences. To this end, we propose a \textbf{Pr}eference-\textbf{a}ware scheme in \textbf{F}air \textbf{F}ederated \textbf{L}earning (called PraFFL) to generate preference-specific models in real time. PraFFL can adaptively adjust the model based on each client's preferences to meet their needs. We theoretically prove that PraFFL can offer the optimal model tailored to an arbitrary preference of each client, and show its linear convergence. Experimental results show that our proposed PraFFL outperforms six fair federated learning algorithms in terms of the model's capability of adapting to clients' different preferences. Our implementation is available at \href{https://github.com/rG223/PraFFL}{\texttt{\textcolor[HTML]{ED028F}{https://github.com/rG223/PraFFL}}}.
\end{abstract}

\begin{CCSXML}
<ccs2012>
   <concept>
       <concept_id>10010147.10010178.10010219</concept_id>
       <concept_desc>Computing methodologies~Distributed artificial intelligence</concept_desc>
       <concept_significance>500</concept_significance>
       </concept>
   <concept>
       <concept_id>10003120.10003121.10003122.10003332</concept_id>
       <concept_desc>Human-centered computing~User models</concept_desc>
       <concept_significance>500</concept_significance>
       </concept>
   <concept>
       <concept_id>10003456.10010927</concept_id>
       <concept_desc>Social and professional topics~User characteristics</concept_desc>
       <concept_significance>500</concept_significance>
       </concept>
 </ccs2012>
\end{CCSXML}

\ccsdesc[500]{Computing methodologies~Distributed artificial intelligence}

\keywords{Fairness, Federated Learning, Trade-Off, Preference-Aware, Model Adaptation, Theoretical Guarantee}


\maketitle

\section{Introduction}
Federated learning (FL) is a distributed machine learning method that learns from data across multiple clients \cite{mcmahan2017communication,tan2022towards,wu2024breaking,wu2024heterogeneity,wu2024neulite}, where a global model is trained cooperatively among clients without sharing data. Since the global model learns from multiple data sources, it has a stronger generalization ability across multiple clients than separately training model for each client \cite{yuan2021we}. However, a key concern in FL is to ensure that the trained model is unbiased towards any particular groups (e.g., special races) of sensitive features  \cite{konevcny2016federated,hu2022fair,papadaki2022minimax}. This implies that a fair model needs to treat different groups equally, such that model's predictions are identical across different groups.

\begin{figure}[t]
    \centering
    \setlength{\abovecaptionskip}{0.1cm}
    \includegraphics[width=0.9\linewidth]{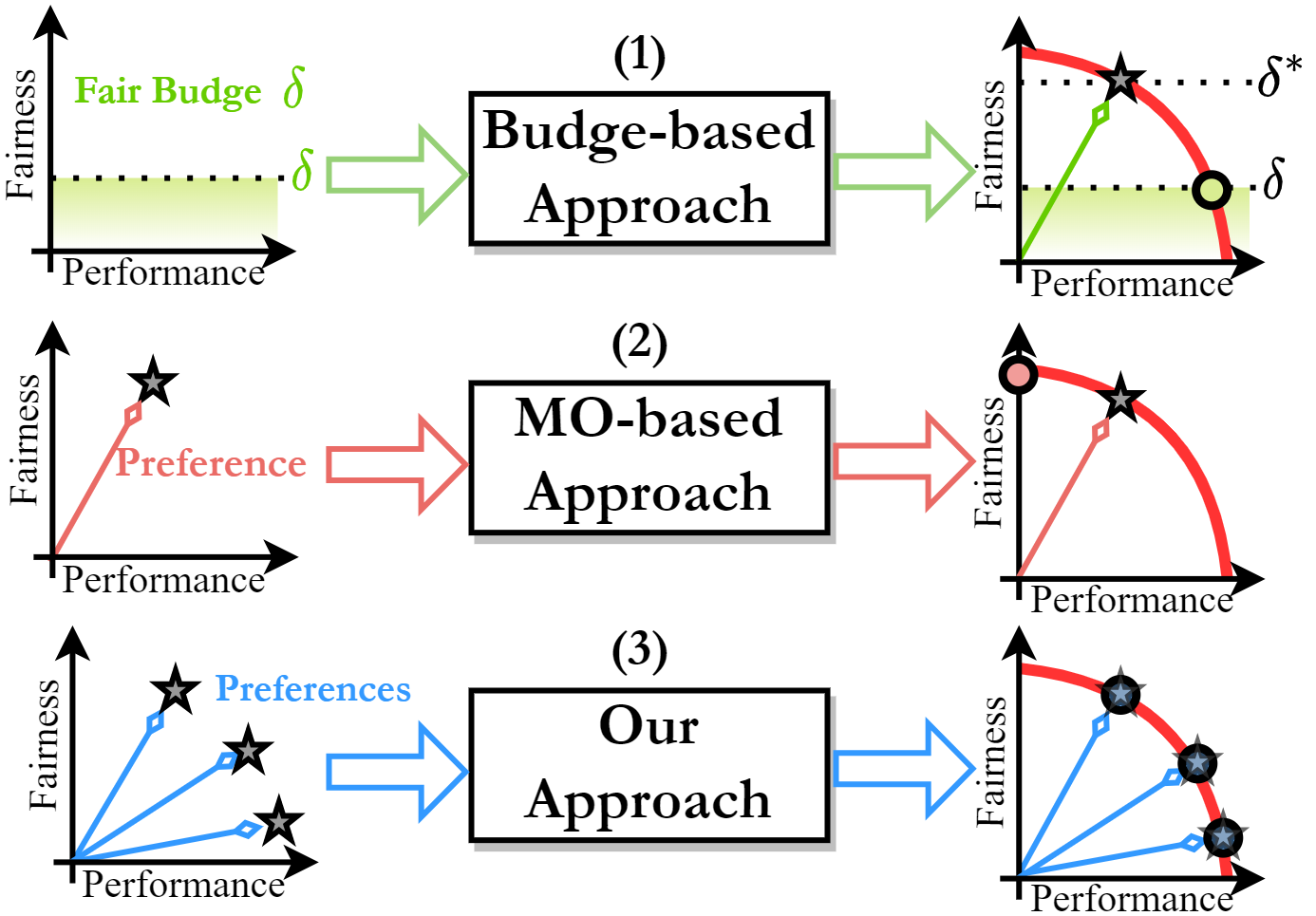}
    \caption{The model trained under different approaches. Gray stars represent the desired models, and colored circles represent the obtained models. The red curve indicates the optimal performance-fairness trade-offs, typically unknown in advance. $\delta^*$ is the fair budget that most closely aligns with the green preference arrow. For the first case, the desired model is obtained when $\delta=\delta^*$, where $\delta^*$ is the fair budget that most closely aligns with the green preference arrow. However, since the Pareto front is unknown, it is not possible to determine the exact value of $\delta^*$ that corresponds to the intersection of the preference direction and the Pareto front. Instead, $\delta$ must be repeatedly adjusted until it reaches $\delta^*$. For the second case, the obtained model deviates from the desired model due to the concave shape of the Pareto front (see Fig. \ref{fig:tch}). For the third case, our proposed approach not only achieves the desired models but also enables simultaneous handling of multiple preferences.} \label{motiv}
    \label{fig:enter-label}
\end{figure}

Fair FL was proposed to improve the fairness of the model \cite{zeng2021improving,abay2020mitigating,ezzeldin2023fairfed,pan2023fedmdfg}, where the fairness of the model is usually defined as the differences in predictions across different groups. \rev{Specifically, the smaller the prediction differences among different groups, the better the model's fairness.} Unfortunately, many studies \cite{gu2022privacy,yu2020fairness,du2021fairness} argue that improving model fairness and model performance in FL is a trade-off issue, i.e., model fairness and model performance cannot be improved simultaneously. Some existing works characterized the trade-off using hyperparameters. For example, budget-based approaches (e.g., LFT+Fedavg \cite{zeng2023federated} and Fairfed \cite{ezzeldin2023fairfed}) treat the fairness of the model as a constraint (see Fig. \ref{fig:enter-label}). Multi-objective-based (MO-based) approaches (e.g., FAIR-FATE \cite{salazar2023fair}) takes the linear combination of model performance and model fairness as the optimization objective. However, most of aforementioned works have two main limitations.
 First, their determined models may not satisfy a client's given preference over model fairness and model performance. Specifically, those budget-based approaches aim to match fairness but ignore performance. For example, Fig. \ref{fig:enter-label} depicts that the budget-based approaches need to repeatedly train the model after adjusting fair budget $\delta$ until the desired model is obtained. Those MO-based approaches (e.g., FAIR-FATE) can obtain the desired model only when the shape of the optimal performance-fairness curve (also called Pareto front) is convex \cite{boyd2004convex}, while such a convexity is not necessarily satisfied in FL scenario. Second, these approaches (e.g., \cite{abay2020mitigating,ezzeldin2023fairfed,salazar2023fair}) may not be applicable to the scenario where each client has multiple preferences. \rev{A practical example is to take the social platform TikTok as a client to meet preferences from billions of users \cite{jacques2023stopasianhate}. In particular,} users expect the model to recommend high-quality videos that align with their interests, i.e., these users prioritize model performance. In contrast, some other users seek unbiased recommendations towards/against any particular racial group, i.e., these users prioritize model fairness. For those aforementioned works (e.g., \cite{abay2020mitigating,ezzeldin2023fairfed,pan2023fedmdfg}), when each client has different requirements (i.e. preferences), their model needs to be retrained multiple times to adapt each requirement. Thus, their time complexity grows linearly with the number of preferences, as shown in Table \ref{mm}. Inspired by this, we aim to answer the following research questions:

\vspace{2pt} 
 \textbf{\textit{How can the model (i) effectively align with each client's specific preference and (ii) be adaptively adjusted in real time based on each client's diverse preferences?}}
\vspace{2pt} 

\begin{table}[t]
\renewcommand\arraystretch{1} 
\setlength{\abovecaptionskip}{0cm} 
\caption{
Time complexity comparison of multiple FFL methods with respect to the number of preferences $\kappa$.
} 
\label{mm}
\centering
\setlength{\tabcolsep}{2mm} 
\begin{tabular}{cccc}
\toprule
\multirow{2}{*}{\shortstack{Method}} & \multirow{2}{*}{\shortstack{Time\\Complexity}} & \multirow{2}{*}{\shortstack{Real-Time\\Adaption}} & \multirow{2}{*}{\shortstack{Preference\\Awareness}} \\ 
& & & \\ \midrule
LFT+Ensemble \cite{zeng2021improving}  & $\textcolor[HTML]{CB0000}{\mathcal{O}(\kappa)}$  & \textcolor[HTML]{CB0000}{\ding{55}}  & \textcolor[HTML]{CB0000}{\ding{55}} \\
LFT+Fedavg \cite{konecny2016federated} & $\textcolor[HTML]{CB0000}{\mathcal{O}(\kappa)}$  & \textcolor[HTML]{CB0000}{\ding{55}}  & \textcolor[HTML]{CB0000}{\ding{55}} \\
Agnosticfair \cite{du2021fairness}    & $\textcolor[HTML]{CB0000}{\mathcal{O}(\kappa)}$  & \textcolor[HTML]{CB0000}{\ding{55}}  & \textcolor[HTML]{CB0000}{\ding{55}} \\
FairFed \cite{ezzeldin2023fairfed}    & $\textcolor[HTML]{CB0000}{\mathcal{O}(\kappa)}$  & \textcolor[HTML]{CB0000}{\ding{55}}  & \textcolor[HTML]{CB0000}{\ding{55}} \\
FedFB \cite{zeng2021improving}        & $\textcolor[HTML]{CB0000}{\mathcal{O}(\kappa)}$  & \textcolor[HTML]{CB0000}{\ding{55}}  & \textcolor[HTML]{CB0000}{\ding{55}} \\
PraFFL (Ours)                         & $\textcolor[HTML]{009901}{\mathcal{O}(1)}$      & \textcolor[HTML]{009901}{\ding{51}}  & \textcolor[HTML]{009901}{\ding{51}} \\ 
\bottomrule
\end{tabular}
\end{table}

These problems involve three technical challenges: (I) establishing the connection between the two objectives (performance and fairness) and the client's preference; (II)  alleviating the impact of data heterogeneity; and (III) preventing the privacy leakage of client preferences. To this end, we propose a \textbf{Pr}eference-\textbf{A}ware scheme in \textbf{F}air \textbf{F}ederated \textbf{L}earning (PraFFL). Fig. \ref{motiv}(c) shows that our proposed PraFFL can accurately learn a mapping from the preference vector to a trade-off point between model performance and model fairness, enabling the model to link its fairness and performance with the corresponding preference vector. In PraFFL, we include a personalized FL method to alleviate the impact of data heterogeneity. Meanwhile, we introduce a hypernetwork that isolates each client's preference information, preventing it from being exchanged with other clients or the server, thereby ensuring client privacy. PraFFL can provide the preference-specific model to each client and achieve the time complexity of $O(1)$. In other words, once the model is trained by PraFFL, it allows the model to adapt to each client's preferences during the inference phase. In addition, PraFFL is theoretically proven to obtain the Pareto front within one training. 

The main contributions of this paper are highlighted as follows:
\vspace{-0.2cm}
\begin{itemize}
    \item We present a preference-aware learning framework, which enables the model to adapt to different preferences of each client.
    \item We propose a PraFFL scheme to further address the three technical challenges while allowing the model to adapt to arbitrary preferences of each client.
    \item We theoretically prove that PraFFL can linearly converge to optimal model that meets each client's preference, and can learn the Pareto front on each client's dataset.
    \item Numerical experiments validate that our proposed PraFFL outperforms other baselines on four datasets in terms of the model's capability in adapting to clients' different preferences. 
\end{itemize}

\vspace{-0.4cm}
\section{Related Work}\label{s2}
There are two types of fairness in fair FL: client-based fairness \cite{mohri2019agnostic,li2019fair,lyu2020collaborative,wang2021federated} and group-based fairness \cite{mohri2019agnostic,yue2023gifair,deng2020distributionally}. The purpose of client-based fairness is to reduce the variance of model performance across all clients while maintaining the performance of these models. Our work mainly discusses group-based fairness, in which each client has some sensitive features (e.g., gender and race). The goal of group-based fairness is to avoid discrimination against sensitive features  \cite{kamishima2012fairness,roh2020fairbatch,zhang2018mitigating}. Recent studies \cite{ezzeldin2023fairfed,du2021fairness,papadaki2024federated} investigated how to achieve group fairness in FL. For example, an adaptive FairBatch debiasing algorithm \cite{roh2020fairbatch} was proposed under the FL framework, where the optimization problem of each client is a bi-level optimization problem of model fairness and sampling probability for different groups. FedVal \cite{mehrabi2022towards} is another global reweighting strategy, where the client's aggregation weight depends on the fairness budget of the model in the validation set. However, previous studies have shown that improving model performance and model fairness is a trade-off problem \cite{papadaki2024federated,zeng2021improving,salazar2023fair,ezzeldin2023fairfed}. To control the trade-off between model performance and fairness, the most common approach is to treat fairness as a constraint in optimizing each client's model \cite{galvez2021enforcing,zhang2020fairfl,du2021fairness}. Specifically, FAIR-FATE \cite{salazar2023fair} introduces a weight coefficient to linearly combine fairness and model performance as the final optimization goal.
Furthermore, Fairfed \cite{ezzeldin2023fairfed} uses the FairBatch debiasing algorithm \cite{roh2020fairbatch} in the optimization stage to ensure fairness, and a reweighting method in the aggregation stage to control the performance-fairness trade-off. Most of the above methods can only obtain a model corresponding to one of the client's preferences. It is challenging for these methods to meet multiple user’s preferences.


\section{Problem Formulation}\label{s3}
\subsection{Multi-Objective Learning}
Let $\bm{b}\in \mathfrak{B}$ and $y \in \mathcal{Y}$ denote the non-sensitive features and labels of dataset $\mathcal{D}$, respectively. $a \in \mathcal{A}$ represents the sensitive feature where $a$ supports the inclusion of multi-dimensional sensitive information, for example, young man for both age and gender dimensions. The feature vector can be further expressed as $\bm{x}\triangleq(a,\bm{b}) \in \mathcal{X}$. According to \cite{zeng2021improving,chu2021fedfair}, we consider a binary classification problem (i.e., $y \in \mathcal{Y} \triangleq  \{0, 1\}$, where $y=1$ indicates positive class) and multi-group sensitive feature (i.e., $a \in \mathcal{A} \triangleq \{0,..., m\}$). Now, we introduce two objectives in fair FL, i.e., model performance and model fairness.

\textbf{Model Performance.} Let $\bm{\theta}: \mathcal{X} \rightarrow \mathcal{Y}$ be a classifier. Usually, the cross-entropy loss function is used to quantify the model performance in the classification problem, and it is defined as follow:
\begin{equation}\label{celoss}
         \mathcal{L}_{1}(\bm{x},y,\bm{\theta})\!=\!-[y\mathrm{log}(\bm{\theta}(\bm{x}))+(1\!-y)\mathrm{log}(1-\bm{\theta}(\bm{x}))],
\end{equation}
where $\bm{\theta}(\bm{x})$ represents probability of predicting a data sampled with feature $\boldsymbol{x}$ belongs to the positive class. 

\textbf{Model Fairness.} A widely adopted criterion for assessing classifier fairness is \textit{DP disparity} \cite{feldman2015certifying}. Let $\hat{y}$ represent the the label predicted by the classifier based on feature $\boldsymbol{x}$. DP disparity quantifies the maximum difference in expectation of the classifier $\bm{\theta}$ that predicts a data sample to be in positive class across different sensitive groups
\begin{equation}\label{dp}
   \underset{j\in\mathcal{A}}{\mathrm{max}}|\mathbb{E}_{(\bm{x},y)\sim \mathcal{D}_k}[\hat{y}=1{\mid a=j}]-\mathbb{E}_{(\bm{x},y)\sim \mathcal{D}_k}[\hat{y}=1]|.
\end{equation}
  Eq. (\ref{dp}) is not suitable for optimization (in gradient-based training) due to its discontinuity. Instead, model fairness is usually characterized by the following loss function \cite{zeng2021improving}
\begin{equation} \label{fairloss}
\vspace{-0.1cm}
    \mathcal{L}_{2}(\bm{x},y, \bm{\theta})=[(a -\overline{a})(\bm{\theta}(\bm{x}) -\overline{\bm{\theta}}(\bm{x}))]^2,
\end{equation}
where $\overline{a}$ and $\overline{\bm{\theta}}(x)$ represent the average values of $a$ and $\bm{\theta}(x)$ over $\mathcal{D}$, respectively. $\mathcal{L}_{2}$ quantifies the correlation between sensitive features and the model predictions. A higher value of $\mathcal{L}_{2}$ suggests that sensitive features are more strongly correlated with the model's predicted probabilities, indicating a lower fairness.

 Then, we convert the fair FL into a multi-objective problem to solve, which can be expressed as follows:
\begin{equation}\label{moo_problem}
    \min_{\boldsymbol{\theta}\in \Theta }\boldsymbol{\mathcal{L}}(\bm{x},y, \bm{\theta})=(\mathcal{L}_1(\bm{x},y, \bm{\theta}), \mathcal{L}_{2}(\bm{x},y, \bm{\theta})),
\end{equation}
where $\bm{\mathcal{L}}(\bm{x},y, \bm{\theta}): \mathcal{X} \times \mathcal{Y} \times \Theta \rightarrow \mathbb{R}^{2}_{\geq 0}$ is a two-dimensional objective vector. \textit{For presentation simplicity, in the rest of this paper, we use the terms “solution” and “trained model” interchangeably}. 

Model performance and model fairness are conflicting, and we need to make trade-offs between these two objectives \cite{papadaki2024federated,zeng2021improving}. For
ease of notation, let us denote $j$-th objective as $\mathcal{O}_{j}(\bm{\theta})=\mathcal{L}_{j}(\bm{x},y,\bm{\theta})$ for all $j \in \{1, 2\}$. We give the following definitions of fair FL in the context of multi-objective optimization \cite{coello2007evolutionary}.
\begin{definition}[Pareto Dominance] \label{dominated}
 Let $\bm{\theta^a}$, $\bm{\theta^b}\in \Theta$, $\bm{\theta^a}$ is said to dominate $\bm{\theta^b}$, denoted as $\bm{\theta^a}\prec \bm{\theta^b}$, if and only if $\mathcal{O}_i(\bm{\theta^a})\leq \mathcal{O}_i(\bm{\theta^b})$ for all $ i\in\{1, 2\}$ and $\mathcal{O}_j(\bm{\theta^a})<\mathcal{O}_j(\bm{\theta^b})$ for some $j\in\{1, 2\}$. In addition, $\bm{\theta}^{a}$ is said to strictly dominate $\bm{\theta}^{b}$, denoted as $\bm{\theta}^{a} \prec_{strict} \bm{\theta}^{b}$, if and only if $\mathcal{O}_{i}(\bm{\theta}^{a}) < \mathcal{O}_{i}(\bm{\theta}^{b})$ for all $i \in \{1, 2\}.$
\end{definition}

\begin{definition}[Pareto Optimality]\label{paretoop}
     $\bm{\theta}^*\in \Theta$ is Pareto optimal if there does not exist $\hat{\bm{\theta}}\in \Theta$ such that $\hat{\bm{\theta}}\prec \bm{\theta}^*$. In addition, $\bm{\theta}^{'}$ is weakly Pareto optimal if there is no $\bm{\theta} \in \Theta$ such that $\bm{\theta} \prec_{strict} \bm{\theta}^{'}$.
\end{definition}

\begin{definition}[Pareto Set/Front]
    The set of all Pareto optimal solutions $\mathcal{P}^{*} \subseteq \Theta$ is called the Pareto set, and its image in the objective space $\mathcal{PF}^{*} = \{\bm{\mathcal{O}}(\bm{\theta}) \mid \bm{\theta} \in \mathcal{P}^{*}\}$ is called the Pareto front. 
\end{definition}

To evaluate the quality of a solution set, a hypervolume (HV) indicator \cite{fonseca2006improved} is commonly used.

\begin{definition}[Hypervolume] \label{hv}
    Let $\hat{\Theta}=\{\bm{\theta}^{(1)}, ..., \bm{\theta}^{(n)}\}$ be a set of objective vectors given solution $\bm{\theta}^{(i)}$, $i \in$ [$n$]. The $hypervolume$ indicator $\mathcal{H}_{\bm{r}}(\hat{\Theta})$ is the two-dimensional Lebesgue measure of the region dominated by $\mathcal{P}^{*}(\hat{\Theta})$ and bounded by a pre-defined reference point $\bm{r}\in\mathbb{R}^2.$
\end{definition}
Fig. \ref{fig:hv} shows the calculation of HV for a set of five solutions, where the gray area represents the HV value. We aim to make the model adapt to client preferences, and to maximize the HV of the Pareto front in the objective space.

\subsection{Preference-Aware Learning Framework}
    Inspired by the scalarization method in multi-objective optimization, we introduce preference vector $\bm{\lambda} \in \Lambda \in \mathbb{R}^{2}_{_{> 0}} (\sum_{i=1}^{2}\lambda_{i}=1)$ and incorporate it into the original multi-objective optimization problem (Eq. (\ref{moo_problem})) to characterize clients' preference information. In particular, we consider a generalized setting \cite{li2021ditto} where each client $k$ aims to obtain its own model $\bm{\theta}_k$. This setting can generalize the conventional FL case where clients aim to learn an identical global model. To enable the model to adapt to each client's individual preference, we concatenate ($\oplus$) the input $\bm{x}$ with the preference vector $\bm{\lambda}$ as joint features for training each client's local model $\bm{\theta}_{k}: \mathcal{X} \oplus \Lambda \rightarrow \mathcal{Y}$. By considering the preference vector in FL system, Eq. (\ref{moo_problem}) can be transformed to
\begin{equation}\label{basic}
\min_{(\bm{\theta}_{1},\dots,\bm{\theta}_{K}) \in \Theta_{K}} \frac{1}{K} \sum_{k\in [K]} \mathbb{E}_{\bm{\lambda}\sim \Lambda}  \mathbb{E}_{(\bm{x},y)\sim \mathcal{D}_k} \bm{\lambda} \cdot \bm{\mathcal{L}}\big(\bm{x} \oplus \bm{\lambda},y, \boldsymbol{\theta}_{k}\big),
\end{equation}  
\begin{figure}[t]
    \centering
    \includegraphics[width=0.6\linewidth]{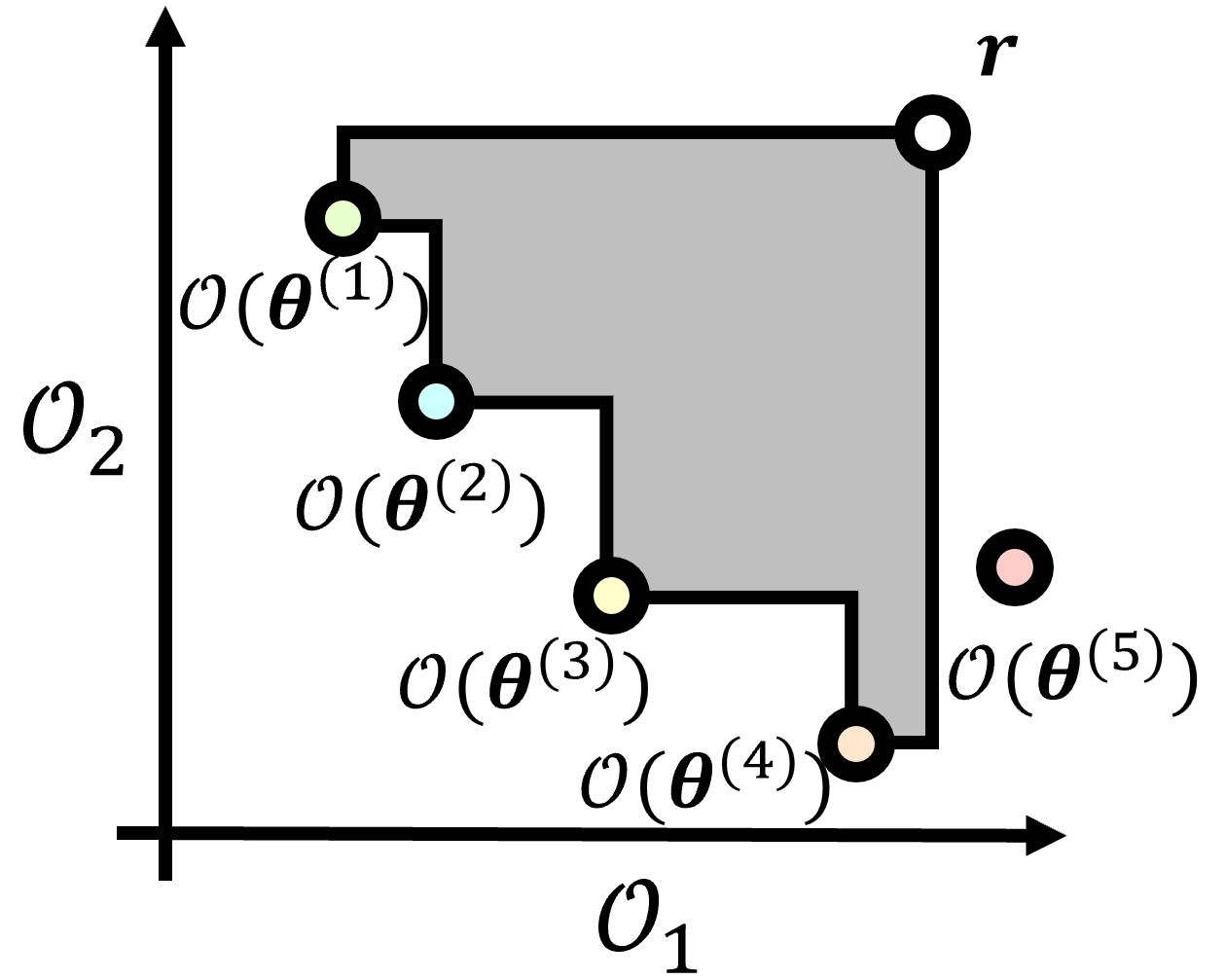}
    \caption{Diagram of hypervolume.}
    \label{fig:hv}
\end{figure}
where $\Theta_{K}$ is the feasible space of $K$ models. $\mathcal{D}_{k}$ represents the dataset of client $k$. The system aims to provide models tailored to any preference. Hence, the system must be trained on the expectation of \textit{preference distribution} $\Lambda$ to encompass all potential preferences. In practice, the distribution $\Lambda$ can be set to a uniform distribution when there is no prior knowledge of clients. Once the model $\bm{\theta}_{k}$ ($k \in [K]$) is optimized, the prediction results of each client's dataset will be determined by each client's given preference in the inference phase. The Pareto front can be generated by examining the models of all possible preference vectors in the inference phase (i.e., one preference generates one Pareto optimal solution). 

Although our proposed preference-aware learning framework (Eq. (\ref{basic})) takes client preferences into account, there are three difficulties in obtaining high-quality Pareto front by Eq. (\ref{basic}):
\begin{enumerate}[(I)]
    \item We have no prior knowledge of the Pareto front shape. According to \cite{boyd2004convex}, a weight-sum function in Eq. (\ref{basic}) only works when the Pareto front is convex. Eq. (\ref{basic}) cannot guarantee the Pareto optimal solution is in the direction of preference vector when the Pareto front is concave (illustrated in Fig. \ref{fig:tch}(a)).
    \item The data distribution among different clients is heterogeneous, that is, their Pareto fronts are also different (depending on each client's data distribution).
    \item Optimizing the system via Eq. (\ref{basic}) may leak preference information of clients (e.g., data reconstruction attack \cite{singhal2021federated}) because preference vectors are used as input data.
\end{enumerate}

\section{The Proposed Method: P\MakeLowercase{ra}FFL} \label{modeling}
\begin{figure}[t]
    \centering
    \includegraphics[width=0.95\linewidth]
    {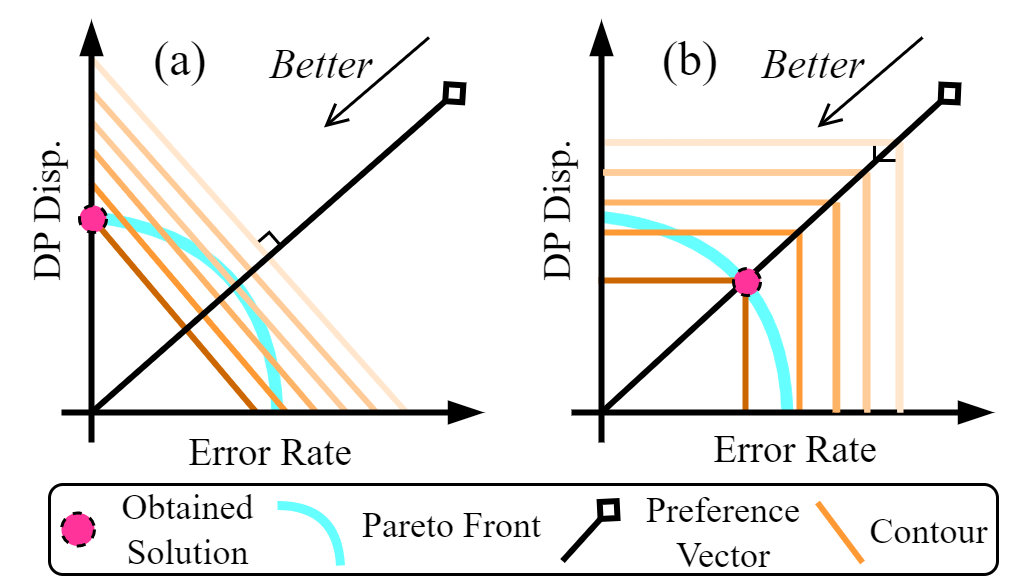}
    \caption{The solution obtained by different optimization functions. (a): The weighted-sum function (Eq. (\ref{basic})). (b): The weighted Tchebycheff function (Eq. (\ref{tch_fl})).}
  \label{fig:tch}
\end{figure}
 In this section, we propose a preference-aware scheme in fair FL (PraFFL) to further address three difficulties in solving Eq. (\ref{basic}).
Section \ref{singlemoo} presents a preference alignment approach to address difficulty (I). Moreover, Section \ref{objective} incorporates client preferences into personalized FL to address difficulty (II). Furthermore, we propose to use a hypernetwork to consider preferences in Section \ref{sechypernet}, which protects the clients' preference privacy (addressing difficulty (III)). In addition, Section \ref{solver} summarizes the algorithm details of PraFFL scheme. 

\subsection{Alignment of Model with Preference}\label{singlemoo}
It is difficult to provide a model whose performance and fairness fall in the direction of the client's corresponding preference vector. To obtain a Pareto optimal solution in the direction of any given preference vector, we introduce a weighted Tchebycheff function \cite{miettinen1999nonlinear} as the optimization function for each client as follows:
\begin{equation} \label{tch}
     \min_{\bm{\theta}_{k}}g_{\mathrm{t}}(\bm{x} \oplus \bm{\lambda},y, \boldsymbol{\theta}_{k} \mid \boldsymbol{\lambda})=\min_{\bm{\theta}_{k}}  \max_{i \in \{1, 2\}}  \left\{\frac{\mathcal{L}_{i}\big(\bm{x} \oplus \bm{\lambda}, y, \boldsymbol{\theta}_{k}\big)}{\lambda_{i}}\right\}. 
\end{equation}
We can obtain the following promising property of problem (\ref{tch}).

\begin{lemma}[Preference Alignment \cite{miettinen1999nonlinear}] \label{them1}
Given a preference vector $\bm{\lambda} > 0$, a solution $\bm{\theta}_{k} \in \Theta$ is weakly Pareto optimal to problem \ref{moo_problem} if and only if $\bm{\theta}_{k}$ is an optimal solution to problem (\ref{tch}).
\end{lemma}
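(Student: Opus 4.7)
The plan is a classical two-direction argument, following the standard Tchebycheff scalarization proof in Miettinen's book. Let $g(\bm{\theta}_k\mid\bm{\lambda}) := \max_{i\in\{1,2\}} \frac{\mathcal{L}_i((\mathcal{D}_k,\bm{\lambda}),\bm{\theta}_k)-z_i^{*}}{\lambda_i}$ denote the Tchebycheff objective in Eq.~(\ref{tch}).

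For the direction \emph{(Tchebycheff optimum $\Rightarrow$ weak Pareto optimum)}, I would argue by contradiction. Suppose $\bm{\theta}_k$ minimizes $g(\cdot\mid\bm{\lambda})$ but is not weakly Pareto optimal. By Definition of weak Pareto optimality, there exists $\bm{\theta}'\in\Theta$ with $\mathcal{L}_i(\bm{\theta}')<\mathcal{L}_i(\bm{\theta}_k)$ for every $i\in\{1,2\}$. Since every $\lambda_i>0$, dividing each strict inequality by $\lambda_i$ after subtracting $z_i^{*}$ preserves the direction, and taking the maximum over the finite index set $\{1,2\}$ preserves strict inequality, giving $g(\bm{\theta}'\mid\bm{\lambda})<g(\bm{\theta}_k\mid\bm{\lambda})$, which contradicts the assumed optimality of $\bm{\theta}_k$. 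The only subtlety here is the (elementary) fact that a coordinatewise strict inequality on finitely many terms yields a strict inequality of their maxima, which I would state explicitly.

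For the reverse direction \emph{(weak Pareto optimum $\Rightarrow$ Tchebycheff optimum for some $\bm{\lambda}>0$)}, the plan is to construct a preference vector explicitly. Set
\begin{equation*}
    \lambda_i^{*} \;=\; \frac{1/(\mathcal{L}_i(\bm{\theta}_k)-z_i^{*})}{\sum_{j=1}^{2} 1/(\mathcal{L}_j(\bm{\theta}_k)-z_j^{*})},
\end{equation*}
so that $(\mathcal{L}_i(\bm{\theta}_k)-z_i^{*})/\lambda_i^{*}$ is a single constant $C>0$ independent of $i$, i.e., $g(\bm{\theta}_k\mid\bm{\lambda}^{*})=C$. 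Suppose, for contradiction, that some $\bm{\theta}'\in\Theta$ attains $g(\bm{\theta}'\mid\bm{\lambda}^{*})<C$. Then $\mathcal{L}_i(\bm{\theta}')<z_i^{*}+\lambda_i^{*}C=\mathcal{L}_i(\bm{\theta}_k)$ for all $i$, contradicting weak Pareto optimality of $\bm{\theta}_k$. This establishes the claim.

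The main obstacle is handling the degenerate case in the construction above, namely when $\mathcal{L}_i(\bm{\theta}_k)=z_i^{*}$ for some $i$, which would make $\lambda_i^{*}$ ill-defined. I would dispose of this by noting that if $\mathcal{L}_i(\bm{\theta}_k)=z_i^{*}$ for every $i$, then $\bm{\theta}_k$ attains the utopia point and is trivially a Tchebycheff optimum for any $\bm{\lambda}>0$; otherwise, after a standard perturbation argument (using an ideal vector $z_i^{**}=z_i^{*}-\varepsilon$ slightly below the lower bound, as done in Miettinen), the construction goes through unchanged. Since the result is classical (cited as Miettinen 1999), I would keep the proof brief and either reference the book directly or present only the contradiction arguments above with a short remark on the degenerate case.
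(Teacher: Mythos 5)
Your overall strategy is the standard two-direction Tchebycheff argument and matches what the paper (which simply cites Miettinen and re-derives the same facts in its Theorems 5.2 and 5.3) intends; the forward direction (Tchebycheff minimizer $\Rightarrow$ weakly Pareto optimal) is correct as written, and you are also right to read the "only if" direction as an existence statement over $\bm{\lambda}$ rather than a claim for the given $\bm{\lambda}$. However, there is a concrete error in your construction for the reverse direction. The Tchebycheff function in problem (\ref{tch}) \emph{divides} by $\lambda_i$, i.e., it is $\max_i \{(\mathcal{L}_i - z_i^*)/\lambda_i\}$, so to make the bracketed quantity constant in $i$ you need $\lambda_i^*$ \emph{directly} proportional to the gap $\mathcal{L}_i(\bm{\theta}_k) - z_i^*$, namely
\begin{equation*}
\lambda_i^{*} \;=\; \frac{\mathcal{L}_i(\bm{\theta}_k)-z_i^{*}}{\sum_{j=1}^{2}\bigl(\mathcal{L}_j(\bm{\theta}_k)-z_j^{*}\bigr)},
\end{equation*}
which gives $(\mathcal{L}_i(\bm{\theta}_k)-z_i^{*})/\lambda_i^{*} = \sum_j(\mathcal{L}_j(\bm{\theta}_k)-z_j^{*}) =: C$ for every $i$. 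Your formula takes $\lambda_i^*$ \emph{inversely} proportional to the gap, which is the correct recipe only for the multiplicative form $\max_i \lambda_i(\mathcal{L}_i - z_i^*)$; with your weights one gets $(\mathcal{L}_i - z_i^*)/\lambda_i^* = (\mathcal{L}_i - z_i^*)^2\sum_j 1/(\mathcal{L}_j - z_j^*)$, which is not independent of $i$, so the claimed identity $\lambda_i^* C = \mathcal{L}_i(\bm{\theta}_k) - z_i^*$ fails and the contradiction step collapses. This is exactly the normalization the paper uses in its Theorem 5.3, where it sets $\lambda_i = (\mathcal{L}_i(\phi_k^*) - z_i^*)/\delta$.

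Once the weights are corrected, the rest of your argument goes through verbatim: $g(\bm{\theta}'\mid\bm{\lambda}^*) < C$ forces $\mathcal{L}_i(\bm{\theta}') < z_i^* + \lambda_i^* C = \mathcal{L}_i(\bm{\theta}_k)$ for all $i$, contradicting weak Pareto optimality. Your handling of the degenerate case is also the right idea, but note that with the corrected weights the issue when $\mathcal{L}_i(\bm{\theta}_k) = z_i^*$ is not division by zero but that $\lambda_i^* = 0$ violates the requirement $\bm{\lambda} > 0$; the utopian perturbation $z_i^{**} = z_i^* - \varepsilon$ resolves this in either case, as you say.
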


Lemma \ref{them1} shows that given a preference vector $\bm{\lambda}$, our method can obtain a weakly Pareto optimal solution for each client if problem (\ref{tch}) is minimal. Then, substituting Eq. (\ref{tch}) into problem (\ref{basic}), the optimization problem in our system can be represented as
\begin{equation}\label{tch_fl}
    \min_{(\bm{\theta}_{1},\dots,\bm{\theta}_{K}) \in \Theta_{K}} \frac{1}{K} \sum_{k\in [K]}  \mathbb{E}_{\bm{\lambda} \sim \Lambda} \mathbb{E}_{(\bm{x},y)\sim \mathcal{D}_k} g_{\mathrm{t}}(\bm{x} \oplus \bm{\lambda},y, \bm{\theta}_{k} \mid \bm{\lambda}).
\end{equation}
\rev{Eq. (\ref{tch_fl}) can optimize each model towards the direction of the preference vector.} Fig. \ref{fig:tch}(a) shows the result obtained by weight-sum function (Eq. (\ref{basic})), which cannot accurately obtain the model on the preference vector. In Fig. \ref{fig:tch}(b), the weight Tchebycheff optimization function we introduced can accurately obtain the model on the preference vector if Eq. (\ref{tch_fl}) is minimum. In this way, we can align the model to client's given preference vector. However, due to the data heterogeneity of clients, problem ({\ref{tch_fl}}) cannot reach the minimum value, resulting in the failure to meet the condition of Lemma \ref{them1}. In the following subsections, we introduce personalized FL based on hypernetwork mapping, which allows each client to obtain its own optimal model without revealing the privacy of client preferences. 

\begin{figure}[t]
    \centering
    \includegraphics[width=\linewidth]{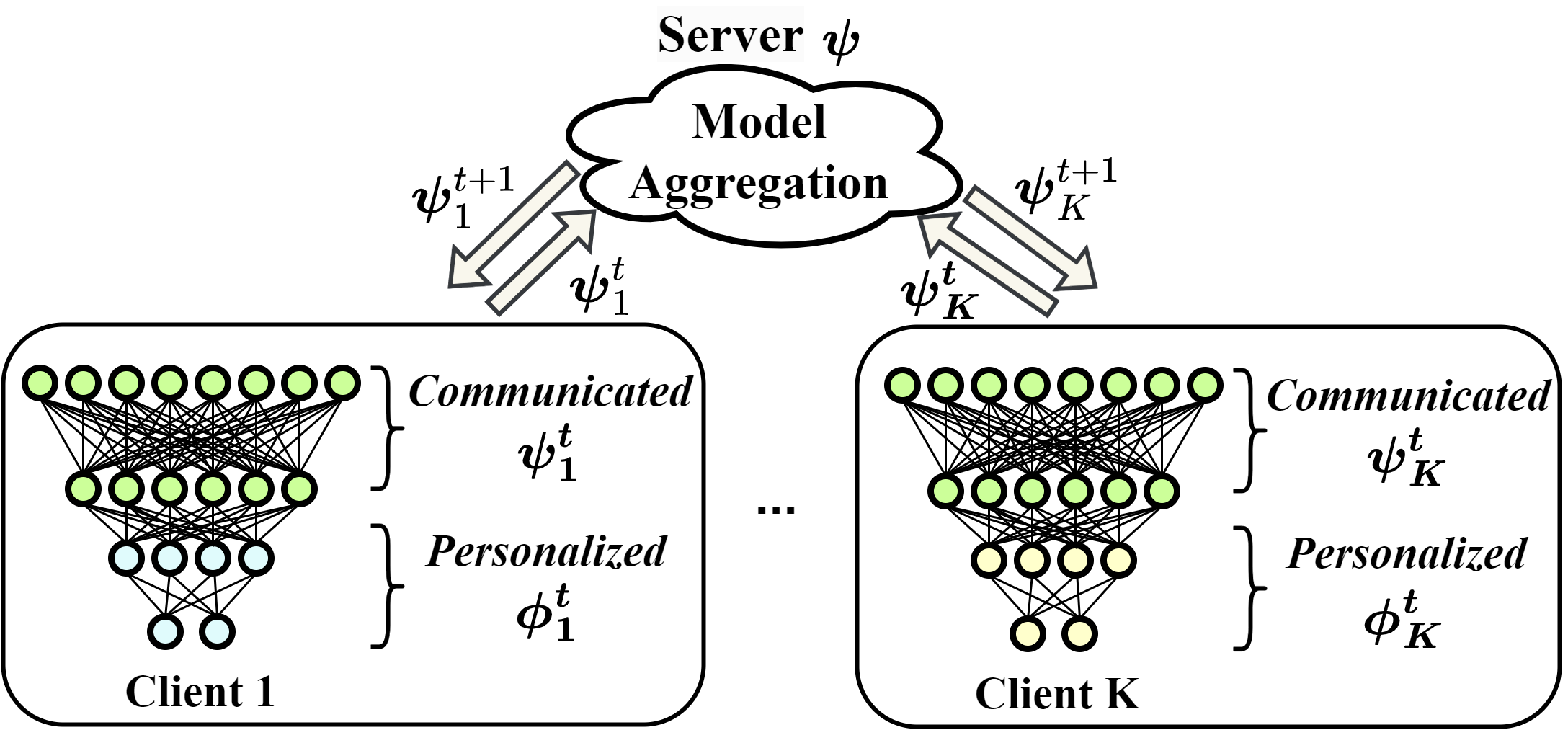}
    \caption{Personalized federated learning framework. 
    }
    \label{fig:pfl}
\end{figure}
\subsection{Personalized FL Based on Preferences}\label{objective}
To address difficulty (II) in Section \ref{s3}, we present a personalized FL framework (see Fig. \ref{fig:pfl}). In this framework, each client's model is divided into two parts (communicated model and personalized model). The communicated model ${\boldsymbol{\psi}}: \mathbb{R}^{|\mathcal{X}|} \rightarrow \mathbb{R}^{d}$ is the part of the model that is shared between clients. $d$ is the dimension of the data after passing through the communicated model. Each client also has a personalized model ${\boldsymbol{\phi}_{k}}: \mathbb{R}^{d} \rightarrow \mathbb{R}^{|\mathcal{Y}|}$. The model of client $k$ is expressed as $\bm{\theta}_k=({\boldsymbol{\psi}}, {\boldsymbol{\phi}}_{k})$. The inclusion of the personalized model can solve difficulty (II) because each client's personalized model is optimized according to its corresponding data distribution. Thus, by taking into account both the communicated and personalized models, the optimization goal of PraFFL can be transformed into
\begin{subequations}\label{bi}
\begin{align} 
& \!\underset{(\boldsymbol{\phi}_{1}, \dots, \boldsymbol{\phi}_{K}) \in \Phi_{K}}{\min} \frac{1}{K}\! \sum_{k \in [K]}\mathbb{E}_{\bm{\lambda} \sim \Lambda} \mathbb{E}_{(\bm{x},y)\sim \mathcal{D}_k} g_{\mathrm{t}} (\bm{x} \oplus \bm{\lambda},y, \bm{\theta}_{k} \mid \bm{\lambda}),  \\
& s.t. \ \boldsymbol{\psi} = \underset{\boldsymbol{\psi} \in \Psi}{\arg \min }  \ \frac{1}{K}\sum_{k \in [K]} \mathbb{E}_{\bm{\lambda}\sim \Lambda}\mathbb{E}_{(\bm{x},y)\sim \mathcal{D}_k}\mathcal{L}_{1}\left(\bm{x} \oplus \bm{\lambda}, y, \bm{\theta}_k\right). 
\end{align}
\end{subequations}

Our idea is to ensure the performance of the communicated model (Eq. (\ref{bi}b)), and then optimize the personalized model according to the given preferences (Eq. (\ref{bi}a)). As will be seen in later sections, we consider an iterative optimization process. When Eq. (\ref{bi}b) is being optimized, clients collaboratively optimize a global communicated model $\boldsymbol{\psi}$ to minimize cross-entropy loss (performance) over their own datasets. After Eq. (\ref{bi}b) has been optimized, the system optimizes the personalized models of clients, considering the expected model performance and fairness over clients' preferences. Note that the optimization of communicated model takes into account the preference vector as input. These preference vectors are easily leaked through data reconstruction (difficulty (III)) \cite{jin2021cafe}.

\subsection{Coupled Preferences with Hypernetwork \label{sechypernet}}
To further solve difficulty (III), we introduce a \textit{hypernetwork} to prevent the leakage of the client's preference information. Hypernetwork is deployed on every client. Let ${\bm{\beta}_{k}}: \mathbb{R}^{m} \rightarrow \mathbb{R}^{|\boldsymbol{\phi}_{k}|}$ be a hypernetwork, which aims to learn a mapping from the preference vector to the parameters of the personalized model. Given a preference vector $\bm{\lambda}$, the personalized model ${{\boldsymbol{\phi}}_k}$ of client $k$ is determined as follows:
\begin{equation} \label{hyper}
    \boldsymbol{\phi}_{k} = {\bm{\beta}_k}(\bm{\lambda}).
\end{equation}
\rev{Based on Eq. (\ref{hyper}), client's preference $\bm{\lambda}$ are represented as the personalized model via a hypernetwork rather than as part of the joint features (as mentioned in Eq. (\ref{basic})).} Each client $k$ can directly obtain its personalized model by specifying vector $\bm{\lambda}$ in Eq. (\ref{hyper}). \rev{As shown in Fig. \ref{fig:client}, if the hypernetwork can be well-trained, then it can generate the personalized model that satisfies the corresponding preference in the inference phase.} The model of client $k$ is further expressed as $\bm{\theta}_k(\bm{\lambda})=(\boldsymbol{\psi},\bm{\beta}_k(\bm{\lambda}))$. The optimization problem of our system is transformed into the following
\begin{subequations}
\begin{align}
& \underset{(\bm{\beta}_{1},\dots,\bm{\beta}_{K}) \in \mathcal{B}_{K}}{\min} \ \frac{1}{K} \sum_{k \in [K]} \mathbb{E}_{\bm{\lambda} \sim \Lambda} \mathbb{E}_{(\bm{x},y)\sim \mathcal{D}_k} g_{\mathrm{t}} (\bm{x},y, {\bm{\theta}_k}(\bm{\lambda}) \mid \bm{\lambda}),  \\
& s.t. \ \boldsymbol{\psi} = \underset{\boldsymbol{\psi} \in {\Psi}}{\arg \min } \ \frac{1}{K} \sum_{k \in [K]} \mathbb{E}_{\bm{\lambda}\sim \Lambda}\mathbb{E}_{(\bm{x},y)\sim{\mathcal{D}_k}}\mathcal{L}_{1}(\bm{x},y, \bm{\theta}_k({\bm{\lambda}}))). 
\end{align}\label{bii}
\end{subequations}

Eq. (\ref{bii}a) aims to learn a mapping relation from preference vector distribution $\Lambda$ to a set of personalized models. Due to the client's preferences being learned in the hypernetwork, each client's preference information is isolated from other clients. Therefore, hypernetwork can learn each client's preferences while protecting the client's preference information  (addressing difficulty (III)). So far, the three difficulties mentioned in Section \ref{s3} have been solved. 

\begin{figure}[t]
    \centering
    \includegraphics[width=\linewidth]{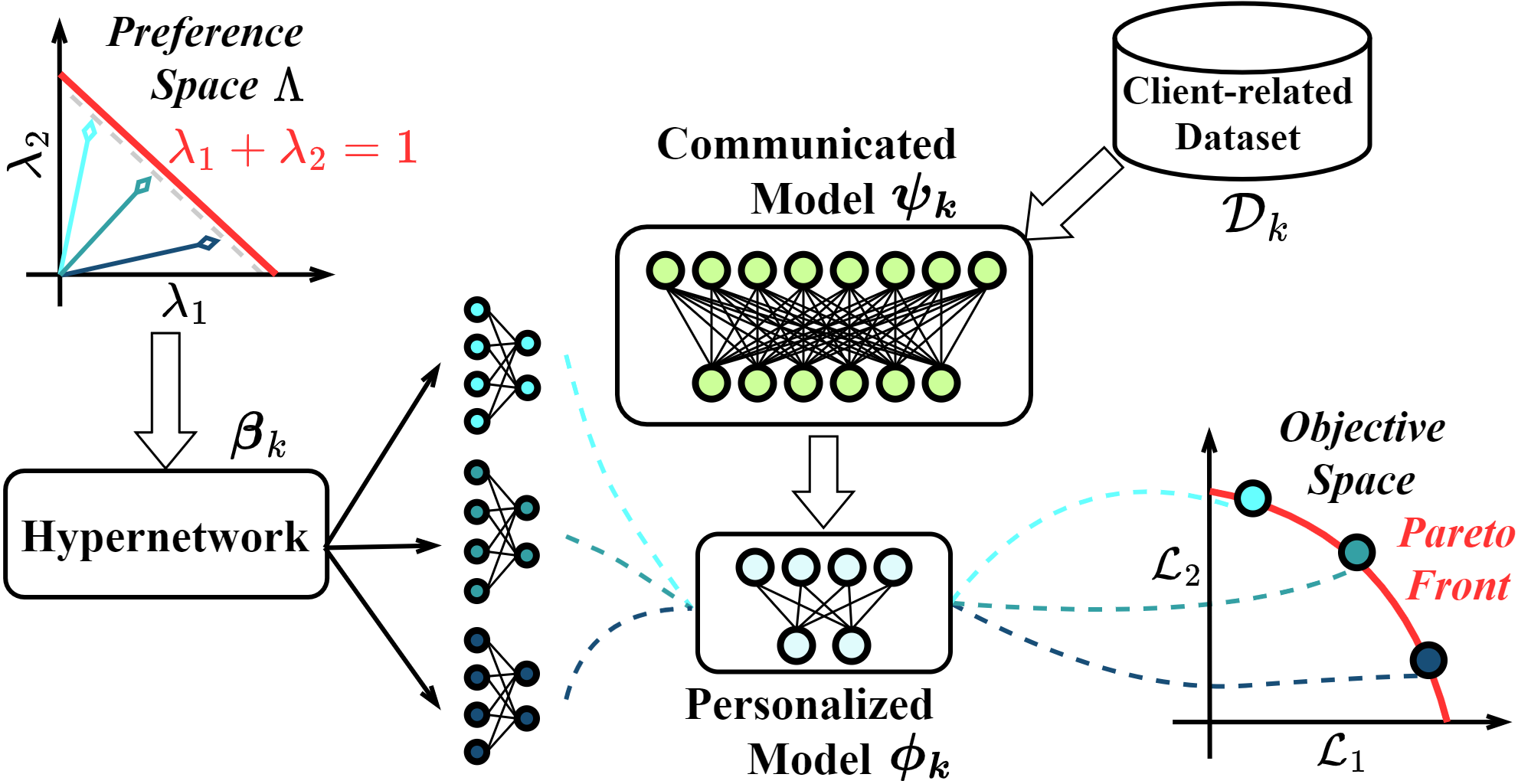}
    \caption{The illustration of hypernetwork inference for client $k$. Hypernetwork ${\bm{\beta}_{k}}$ returns the corresponding personalized models to client $k$ based on the preference vectors.}
    \label{fig:client}
\end{figure}

\subsection{PraFFL Algorithm}\label{solver}
Now, we propose PraFFL algorithm to solve problem (\ref{bii}), which can generate a preference-specific model for each client while addressing difficulties (I)--(III). 
Specifically, there are $T$ rounds in total. In each round, a proportion of $p \in (0, 1]$ clients is selected for training, where the set of clients is denoted as $\mathcal{S}_t$. Then, the communicated model and the hypernetwork are updated sequentially at each client. Each client $k$ first downloads the global communicated model $\boldsymbol{\psi}$ as its local communicated model $\boldsymbol{\psi}_k$. 

To stabilize the optimization of the communicated model for solving problem (\ref{bii}b), we need to froze the personalized model when the communicated model is being updated. In particular, personalized model of client $k$ is frozen to $\bm{\phi}_k = \bm{\beta}(\check{\bm{\lambda}})$, where $\check{\bm{\lambda}}$ is chosen to be $(\frac{1}{2}, \frac{1}{2})$ so that the personalized model is unbiased towards model performance and fairness when optimizing the local communicated model. Client $k$ performs $\tau_{c}$ times of gradient descent algorithm for updating the communicated model at $t$-th round
\begin{equation}\label{communicated}
    \boldsymbol{\psi}^{t,s+1}_{k} =  \boldsymbol{\psi}^{t,s}_{k}- \eta \mathbb{E}_{(\bm{x},y)\sim \mathcal{D}_k}\nabla_{\boldsymbol{\psi}_{k}} \mathcal{L}_{1}(\bm{x},y, {\bm{\theta}^{t,s}_{k}}(\check{\bm{\lambda}})),
\end{equation}
where $s=1,...,\tau_{c}$ and $\eta$ is the learning rate.

After updating the communicated model, each client then optimizes the hypernetwork. When optimizing problem (\ref{bii}a), client $k$ freezes the communicated model as $\boldsymbol{\psi}_{k}^{t,\tau_{c}+1}$. The optimization of the hypernetwork for client $k$ can be expressed as follows:

\begin{equation} \label{expect}
\underset{\boldsymbol{\beta}_{k}}{\min} \ \mathbb{E}_{\boldsymbol{\lambda}\sim\Lambda} \mathbb{E}_{(\bm{x},y)\sim \mathcal{D}_k} g_{\mathrm{t}}(\bm{x},y, {\bm{\theta}_{k}}(\boldsymbol{\lambda})).
\end{equation}
Eq. (\ref{expect}) is difficult to optimize because there are infinite possible values for $\bm{\lambda}$. We use the Monte Carlo sampling method in stochastic optimization to optimize Eq. (\ref{expect}). Thus, client $k$ performs $\tau_p$ 
\begin{equation}\label{peason}
\! \! \!  \boldsymbol{\beta}^{t,j+1}_{k} \! =\boldsymbol{\beta}^{t,j}_k-\eta \mathbb{E}_{(\bm{x},y)\sim \mathcal{D}_k} \frac{1}{n_{\bm{\lambda}}}\sum_{i=1}^{n_{\bm{\lambda}}}\nabla_{\boldsymbol{\bm{\beta}_{k}}}g_{\mathrm{t}}(\bm{x}, y, {\bm{\theta}^{t,j}_{k}}(\boldsymbol{\lambda}^{(i)})),\!
\end{equation}
where $j=1,\dots,\tau_{p}$, $n_{\bm{\lambda}}$ is the batch size for sampling preference vectors and $\bm{\lambda}^{(i)}$ is sampled from preference vector distribution $\Lambda$. In our work, we set $\Lambda$ to be a uniform distribution.

Once the communicated model and personalized model of participated clients are updated at $t$-th round, the participated clients send their communicated models ${\boldsymbol{\psi}_{k}} (k\in [K]$) to the server. Afterwards, the server updates the communicated model with average aggregation as follows:
\begin{equation}\label{agg}
  \boldsymbol{\psi}^{t+1,1}_{k}=\frac{1}{|\mathcal{S}_{t}|}\sum_{k \in \mathcal{S}_{t}}\boldsymbol{\psi}^{t,\tau_c+1}_{k}.  
\end{equation}

After the model aggregation in Eq. (\ref{agg}), the communicated models at $(t+1)$-th round are the same across all clients. In contrast, the personalized model is customized based on the specific data distribution of each client. As shown in Fig. \ref{fig:client}, each client can choose arbitrary preference vectors, and the hypernetwork then provides the corresponding personalized models to the client. Then, each client can combine the personalized model with the communicated model, and the combined model can produce the desired results considering the client's preference over model performance and model fairness.

\section{ALGORITHMIC ANALYSIS}\label{analyse}
In this section, we analyze performance guarantee of the trained model and the convergence rate of PraFFL. The proofs of this section are provided in the Appendix.

\subsection{Learning Capabilities}
Based on the properties of the weighted Tchebycheff function \cite{miettinen1999nonlinear}, we determine the following three main results:

\begin{itemize}
\item Existence of the Pareto optimal model;
\item Condition of Pareto model being Pareto optimal;
\item  The ability to find the Pareto front.
\end{itemize}

We analyze the model of an arbitrary client $k$. Given a preference vector $\bm{\lambda}$ in problem (\ref{bii}a), $i$-th objective can be expressed as $\mathcal{L}_i(\bm{x},y,{\bm{\theta}_k}(\bm{\lambda})), \forall i \in \{1, 2\}$ over dataset $\mathcal{D}_{k}$. First, we analyze the existence of the Pareto optimal model. 
\begin{lemma}[Existence]\label{exist}
    Given a preference vector $\bm{\lambda} \in \mathbb{R}^{2}_{\geq 0}$, problem (\ref{bii}) has at least one Pareto optimal solution.
\end{lemma}

Lemma \ref{exist} shows that PraFFL has Pareto optimal solutions for any client's preference $\bm{\lambda}$. After we determine the existence of Pareto optimality of the model, we then analyze the condition when a model is Pareto optimal.

\begin{lemma}[Pareto Optimality]\label{optima} Given a preference vector $\bm{\lambda} \in \mathbb{R}_{\geq 0}^{2}$, a model is weakly Pareto optimal if it minimizes problem (\ref{bii}). The model is Pareto optimal if and only if the optimal solution of problem (\ref{bii}) is unique.
\end{lemma}

This Lemma shows that PraFFL can achieve the Pareto optimal solution for the preference vector $\bm{\lambda}$ if the conditions in Theorem \ref{optima} are satisfied. Finally, we analyze the ability of PraFFL to find the Pareto front, and we have the following theorem:

\begin{theorem}[Pareto Front]\label{pf}
Let $\bm{\theta}_{k}^{'} \in \Theta$ be an optimal model for a given client's preference. Then, there exists a preference vector $\bm{\lambda} \in \mathbb{R}^{2}_{\geq 0}$ such that we can obtain ${\bm{\theta}_{k}^*} (\bm{\lambda})=\bm{\theta}_{k}'$ from problem (\ref{bii}).
\end{theorem}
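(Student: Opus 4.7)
The plan is to give a constructive proof: for a Pareto optimal $\phi_k^*$, I would exhibit an explicit preference vector $\bm{\lambda}^*$ such that $\phi_k^*$ is an optimal solution of problem (\ref{atleast}) under $\bm{\lambda}^*$, and then invoke the universal approximation capability of the hypernetwork $\bm{\beta}_k$ to assert that there exists parameters making $\bm{\beta}_k(\bm{\lambda}^*) = \phi_k^*$. This mirrors the classical converse direction of the weighted Tchebycheff characterization of Pareto optimality (the forward direction of which is Lemma~\ref{them1}).

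First, I would set the preference vector componentwise proportional to the gap between $\phi_k^*$'s objective values and the ideal point, i.e.\ define
\begin{equation}
\lambda_i^* = \frac{\mathcal{L}_i(\mathcal{D}_k,\phi_k^*\mid \bm{\omega}^*) - z_i^*}{\sum_{j\in\{1,2\}}\bigl(\mathcal{L}_j(\mathcal{D}_k,\phi_k^*\mid \bm{\omega}^*) - z_j^*\bigr)},\quad i\in\{1,2\}.
\end{equation}
This choice makes all ratios $(\mathcal{L}_i(\mathcal{D}_k,\phi_k^*\mid \bm{\omega}^*) - z_i^*)/\lambda_i^*$ equal to a common constant $c$, so that $g_{tch}(\mathcal{D}_k,\phi_k^*\mid \bm{\lambda}^*,\bm{\omega}^*) = c$. (If some $\mathcal{L}_i(\phi_k^*) = z_i^*$, one can perturb by an arbitrarily small $\varepsilon>0$ and take limits, or appeal to a standard tie-breaking argument, so the proof extends to the boundary case.)

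Next, I would argue by contradiction to show $\phi_k^*$ minimizes $g_{tch}(\cdot\mid\bm{\lambda}^*,\bm{\omega}^*)$. Suppose some $\phi_k' \in \Phi$ satisfies $g_{tch}(\mathcal{D}_k,\phi_k'\mid \bm{\lambda}^*,\bm{\omega}^*) < c$. Then for every $i\in\{1,2\}$,
\begin{equation}
\mathcal{L}_i(\mathcal{D}_k,\phi_k'\mid \bm{\omega}^*) - z_i^* < c\,\lambda_i^* = \mathcal{L}_i(\mathcal{D}_k,\phi_k^*\mid \bm{\omega}^*) - z_i^*,
\end{equation}
which gives $\phi_k' \prec_{strict} \phi_k^*$, contradicting the Pareto optimality of $\phi_k^*$. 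Hence $\phi_k^*$ is an optimal solution to problem (\ref{atleast}) under $\bm{\lambda}^*$, and by expressiveness of the hypernetwork there exists $\bm{\beta}_k$ with $\bm{\beta}_k(\bm{\lambda}^*)=\phi_k^*$.

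The main obstacle is the degenerate case in which $\mathcal{L}_i(\mathcal{D}_k,\phi_k^*\mid \bm{\omega}^*) = z_i^*$ for some $i$, which would force $\lambda_i^*=0$ and push the preference vector outside $\mathbb{R}^2_{>0}$; the cleanest remedy is either to restrict to $\bm{\lambda}\in\mathbb{R}^2_{\geq 0}$ (as stated in the theorem) and interpret the ratio in the limit sense, or to use a perturbed ideal point $z_i^{*}-\varepsilon$ and take $\varepsilon\to 0$. A secondary subtlety is the implicit assumption that the hypernetwork family is expressive enough to realize the point mapping $\bm{\lambda}^*\mapsto\phi_k^*$; this is consistent with the universal approximation stance already implicit in Section~\ref{solver}, and I would state it as a mild expressiveness assumption rather than try to prove it.
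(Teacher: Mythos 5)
Your proposal is correct and follows essentially the same route as the paper's own proof: choose $\lambda_i$ proportional to the gap $\mathcal{L}_i(\mathcal{D}_k,\phi_k^*\mid\bm{\omega}^*)-z_i^*$ so that the Tchebycheff value of $\phi_k^*$ equals the normalizing constant, then show that any $\phi_k'$ with a strictly smaller Tchebycheff value would strictly dominate $\phi_k^*$, contradicting Pareto optimality. Your explicit treatment of the degenerate case $\mathcal{L}_i=z_i^*$ and your flagging of the hypernetwork-expressiveness assumption are small refinements the paper leaves implicit, but the argument is the same.
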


Theorem \ref{pf} shows that the models can obtain the entire Pareto front. This is because, for any Pareto optimal model $\bm{\theta}^{'}_{k}$, there is a preference vector $\bm{\lambda}$ that the model ${\bm{\theta}^{*}_{k}}(\bm{\lambda})$ is Pareto optimal.

\subsection{Convergence Analysis}
In this subsection, we discuss the convergence properties of PraFFL. The optimization between the communicated model and the personalized model are correlated. We first prove that the communicated model converges linearly under the personalized FL framework. We then prove that the convergence rate of the personalized model is the same as that of the communicated model. Finally, we conclude that PraFFL also converges linearly. 

The convergence analysis of problem (\ref{bii}) is difficult because the weighted Tchebycheff function is non-differentiable. To overcome this issue, we introduce a \textit{smooth Tchebycheff function} as follows:
\begin{equation}\label{smooth}
    \min_{\bm{\theta}_k}\widetilde{g}(\bm{x},y,\bm{\theta}_k(\bm{\lambda})|\bm{\lambda})=\min_{\bm{\theta}_k}\frac{1}{\gamma} \log \sum_{i\in[2]}e^{\frac{\gamma \mathcal{L}_{i}(\bm{x},y, \bm{\theta}_{k}(\bm{\lambda}))}{\lambda_{i}}},
\end{equation}
where $\gamma > 0$ represents the smooth factor. 

Before presenting convergence results, we list the following two commonly used assumptions.
\begin{assumption}\label{bound_g}
    The expectation of stochastic gradients is always bounded, i.e., for all $\bm{\phi}_{k}^{t} \in \Phi$ and $\bm{\psi}^{t} \in \Psi$, we have $\mathbb{E}[\|\nabla \widetilde{g} \ \|^2] \leq G^2$.
\end{assumption}
\begin{assumption}\label{ass2}
$\mathcal{L}_{i}(\bm{x},y,\bm{\theta}_k(\bm{\lambda}))$ is $\mu_{i}$-strongly convex and $L_{i}$-smooth for all $k \in [K]$, $\bm{\lambda} \in \Lambda$, and $\bm{\theta}_k(\bm{\lambda}) \in \Theta$.
\end{assumption}
 Based on the above assumptions, we can deduce the strong convex coefficient and smoothness coefficient of the smooth Tchebycheff function.

\begin{lemma}[Convexity of Smooth Tchebycheff Function]\label{lemmaconvex}
If Assumption \ref{ass2} is satisfied for $i \in [1, 2]$, then $\widetilde{g}(\bm{x},y, \bm{\theta}_k(\bm{\lambda}))$ is $2log(\sum_{i\in[2]}e^{\frac{\mu_{i}}{2\lambda_{i}}})$-strongly convex function, for $k \in [K]$ and $\bm{\lambda} \in \Lambda$. 
\end{lemma}
\begin{lemma}[Smoothness of Smooth Tchebycheff Function]\label{lemmasmooth}
If Assumption \ref{ass2} is satisfied for $i \in [1, 2]$, then $\widetilde{g}(\bm{x},y,\bm{\theta}_k(\bm{\lambda}))$ is $\sum_{i\in[2]}\frac{L_{i}}{\lambda_{i}}$-smooth, for $k \in [K]$. 
\end{lemma}

\begin{lemma}[Convergence of the Communicated Model \cite{collins2021exploiting}]\label{commconv}
    If the communicated model $\bm{\psi}$ is optimized by Fedavg \cite{mcmahan2017communication} and given a constant $\zeta > 0$, then $\bm{\psi}$ converges to optimal communicated model $\bm{\psi}^*$ at a linear rate
    \begin{equation}
        z(t)=\mathbb{E}[\|\bm{\psi}^t-\bm{\psi}^*\|^2] \leq (1-\eta \zeta)^{t/2} \  \mathbb{E}[\|\bm{\psi}^1-\bm{\psi}^* \|],
    \end{equation}
\textit{with a probability no smaller than $1-te^{-100\min(|\mathcal{X}|^2\log(|\mathcal{S}_t|), d)}$}.
\end{lemma}

Lemma \ref{commconv} implies that in order to achieve a bound of $(1-\eta \zeta)^{t/2} \  \mathbb{E}[\|\bm{\psi}^1-\bm{\psi}^* \|] \leq \varepsilon$, we must run $O(\log(\frac1 \varepsilon))$ iterations of gradient descent. This convergence rate is often called \textit{linear convergence}.

 For simplicity, let us denote $\check{\mu}=2log(\sum_{i\in[2]}e^{\frac{\mu_{i}}{2\lambda_{i}}})$ and $\check{L}=\sum_{i\in[2]}\frac{L_{i}}{\lambda_{i}}$. The convergence speed of the personalized model is given by the following proposition.
\begin{proposition}[Progress of one step]\label{relation}
    Under Assumptions \ref{bound_g} and \ref{ass2}, let client $k$ get selected with probability $p_k$ at each communication round $t$ with decaying $\eta_t=\frac{2\check{\mu}}{(t+1)p_k}$, we have
\begin{align}
    \begin{aligned}
        \mathbb{E}&\left[\|\bm{\phi}_k^{t+1}-\bm{\phi}_k^*\|^2\right] \\ \leq  &(1\!-\!\frac{2}{t+1})\mathbb{E}\left[\|\bm{\phi}_k^t\!-\bm{\phi}_k^*\|^2\right]\! +\! \frac{8\check{\mu}^2G\check{L}}{(t+1)^2p_k^2}  \sqrt{\mathbb{E}\left[\|\bm{\psi}^{t}\!-\bm{\psi}^{*}\|^2\right]} \\ 
        & + \!\frac{4\check{\mu}^2G^{2}}{(t+1)^2p_k^2}\! +\!(\frac{4\check{\mu}^2\check{L}^2}{(t+1)^2p_k^2}\! + \!\frac{2\check{\mu}}{t+1}(\check{L}-\check{\mu}))\mathbb{E}\left[\|\bm{\psi}^{t}-\bm{\psi}^{*}\|^2\right] \\
        & +\frac{4\check{\mu}\check{L}}{t+1} \sqrt{\mathbb{E}\left[\|\bm{\phi}_k^t-\bm{\phi}_k^*\|^2\right] \mathbb{E} \left[\|\bm{\psi}^t-\bm{\psi}^*\|^2\right]}. 
    \end{aligned}
\end{align}
\end{proposition}

Proposition \ref{relation} shows the convergence speed of the personalized model is affected by the convergence speed of the communicated model, which is intuitively reasonable because these two types of models are optimized sequentially in our system. We further prove that each client's model $\bm{\theta}_k(\bm{\lambda})$ also is linear convergence.

\begin{theorem}[Convergence of PraFFL]\label{rela} 
If there exists a constant $A$ such that $\frac{z(t+1)}{z(t)}\geq1-\frac{z(t)}{A}$, for $\bm{\lambda} \in \Lambda$, then we have
\begin{equation}
\mathbb{E}[\|\bm{\theta}_k^t(\bm{\lambda})-\bm{\theta}^*_k(\bm{\lambda})\|^2]=O(\log(\frac{1}{t})).
\end{equation}
\textit{with a probability no smaller than $1-te^{-100\min(|\mathcal{X}|^2log(|\mathcal{S}_t|), d)}$}.
\end{theorem}
Theorem \ref{rela} shows that PraFFL linearly converges to Pareto optimal solution $\bm{\theta}^*_k(\bm{\lambda})$.  
\section{NUMERICAL Experiments}\label{exp}
In this section, we emprically compare our proposed PraFFL algorithm with six advanced baselines (LFT+Fedavg \cite{konecny2016federated}, LFT+Ensemble \cite{zeng2021improving}, Agnosticfair \cite{du2021fairness}, FairFed \cite{ezzeldin2023fairfed}, FedFB \cite{zeng2021improving}, and EquiFL \cite{makhija2024achieving}).
\begin{table}[t]
\renewcommand\arraystretch{1}
\caption{Local/Global HV results of seven different algorithms on four datasets. Local HV is evaluated on the local client datasets, and Global HV is evaluated on the global dataset. The best results are highlighted in bold, while the second-best results are underlined.} \label{tab}
\centering
\setlength{\tabcolsep}{1.8mm}{
\begin{tabular}{cccccc}
\hline
\multirow{2}{*}{{\raisebox{-0.15cm}{Method}}} & \multicolumn{4}{c}{\raisebox{-0.05cm}{Global HV $\uparrow$}}    \\ \cmidrule(r){2-5} & {\raisebox{0cm}{SYNTHETIC}}      & {\raisebox{0cm}{COMPAS}}         & {\raisebox{0cm}{BANK}}             & {\raisebox{0cm}{ADULT}}   \\ \cmidrule(r){1-5}
{LFT+Ensemble}            & 0.479          & 0.555          & 0.881          & 0.764      \\
{LFT+Fedavg}              & \underline{0.698}          & 0.539          & \underline{0.883}          & 0.768       \\
{Agnosticfair}            & 0.530          & 0.546          & 0.882          & \underline{0.783}      \\
{FairFed}                 & 0.342          & 0.507          & 0.878          & 0.269       \\
{FedFB}                   & 0.599          & 0.517          & \underline{0.883}          & 0.764      \\ 
{EquiFL}                   & 0.642          & \underline{0.564}          & 0.877          & 0.761      \\ 
\midrule
{PraFFL}              & {\textbf{0.763}} & {\textbf{0.629}} & {\textbf{0.889}} & {\textbf{0.828}}    \\ \midrule 
\multirow{2}{*}{{\raisebox{-0.15cm}{Method}}}
  & \multicolumn{4}{c}{\raisebox{-0.05cm}{Local HV $\uparrow$}}                       \\ \cmidrule(r){2-5}
                        & {\raisebox{0cm}{SYNTHETIC}}      & {\raisebox{0cm}{COMPAS}}         & {\raisebox{0cm}{BANK}}             & {\raisebox{0cm}{ADULT}}        \\ \midrule 
                        LFT+Ensemble &0.468 & 0.514 & 0.138 & 0.501 \\
                        LFT+Fedavg&0.479 & \underline{0.555} & 0.881 & 0.764 \\
                        Agnosticfair& 0.530 & 0.551 & 0.880 & \underline{0.784} \\
                        FairFed& 0.342 & 0.507 & 0.878 & 0.269 \\
                        FedFB& 0.599 & 0.517 & \underline{0.883} &  0.763 \\
                         EquiFL & \underline{0.604} & 0.526 & 0.882 & 0.764 \\
                        \midrule
                        PraFFL&\textbf{0.778} & \textbf{0.631} & \textbf{0.895} & \textbf{0.834} \\ \midrule
\end{tabular}}
\end{table}
\begin{figure}[t]
    \centering
    \includegraphics[width=\linewidth]{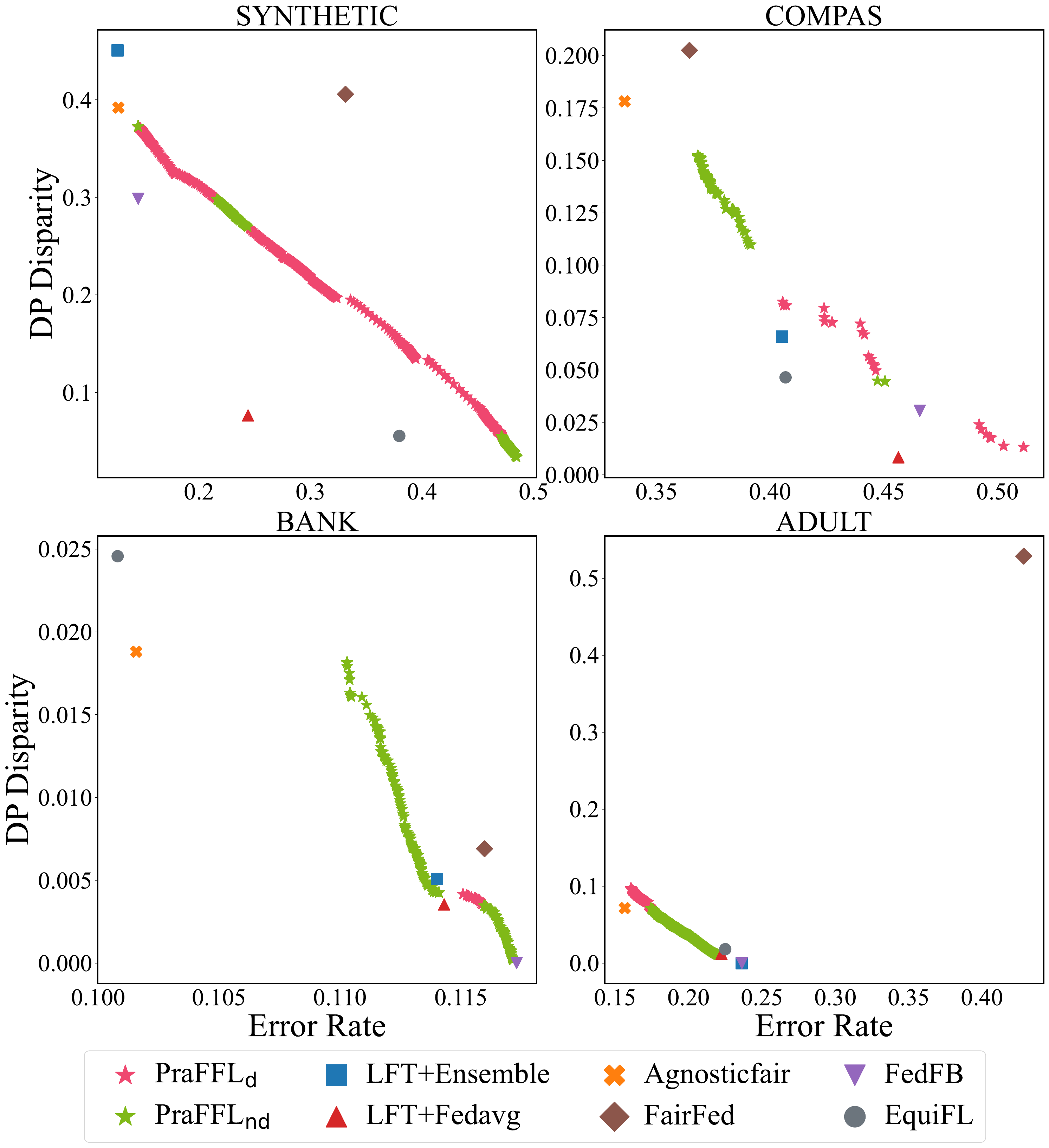}
  \caption{Comparison of solutions obtained by six algorithms on four datasets. We divide the solution set obtained by the PraFFL method into two categories: {\color{d}$\text{PraFFL}_{d}$} and {\color{nd}$\text{PraFFL}_{nd}$}. {\color{d}$\text{PraFFL}_{d}$} means that the solution is dominated (i.e., there is another algorithm that is better than the solutions in {\color{d}$\text{PraFFL}_{d}$}), and {\color{nd}$\text{PraFFL}_{nd}$} means that the solution is non-dominated (i.e., there is no other solution that is better than the solutions in {\color{nd}$\text{PraFFL}_{nd}$}).}
  \label{solutionset}
\end{figure}

\subsection{Experimental Settings}
We report results from three runs on each dataset. PraFFL shows the solutions obtained by 1000 preference vectors in the experiment (in fact, any number of solutions can be obtained through PraFFL inference if a corresponding number of preference vectors are given). The total local epochs are set to 30 (i.e., $\tau_{p}+{\tau_{c}=30}$). We use hypervolume (HV) \cite{fonseca2006improved} to measure the quality of the solution set obtained by each algorithm. We mainly report local HV on local datasets and global HV on global datasets in our experiments. The higher the quality of the solutions in satisfying the client's preferences, the larger the HV value. Four widely used datasets are employed to conduct experiments, including SYNTHETIC \cite{zeng2021improving}, COMPAS \cite{barenstein2019propublica}, BANK \cite{moro2014data} and ADULT \cite{dua2017uci}.
\subsection{Main Results}
Inspired by \cite{hamman2024demystifying,cui2021addressing}, we evaluate all algorithms on local and global datasets. The results presented in Table \ref{tab} demonstrate that PraFFL is consistently superior to the performance of five baselines regarding HV. Our proposed PraFFL has the best local HV value on the four datasets, which are 0.778, 0.631, 0.895, and 0.834, respectively. This is because our proposed method can infer any number of personalized models for clients, thereby obtaining more solutions with different trade-offs between model performance and fairness. Additionally, Table \ref{tab} also shows the global HV of each algorithm. \rev{Although PraFFL is designed for local Pareto front, it can achieve the best global HV compared with six other baselines} Recall that other baselines require retraining the model if each client has more than one preference, and the training time increases linearly with the number of preferences (see Table \ref{mm}). 

\begin{figure*}[tp]
    \centering
    \includegraphics[width=0.95\textwidth]{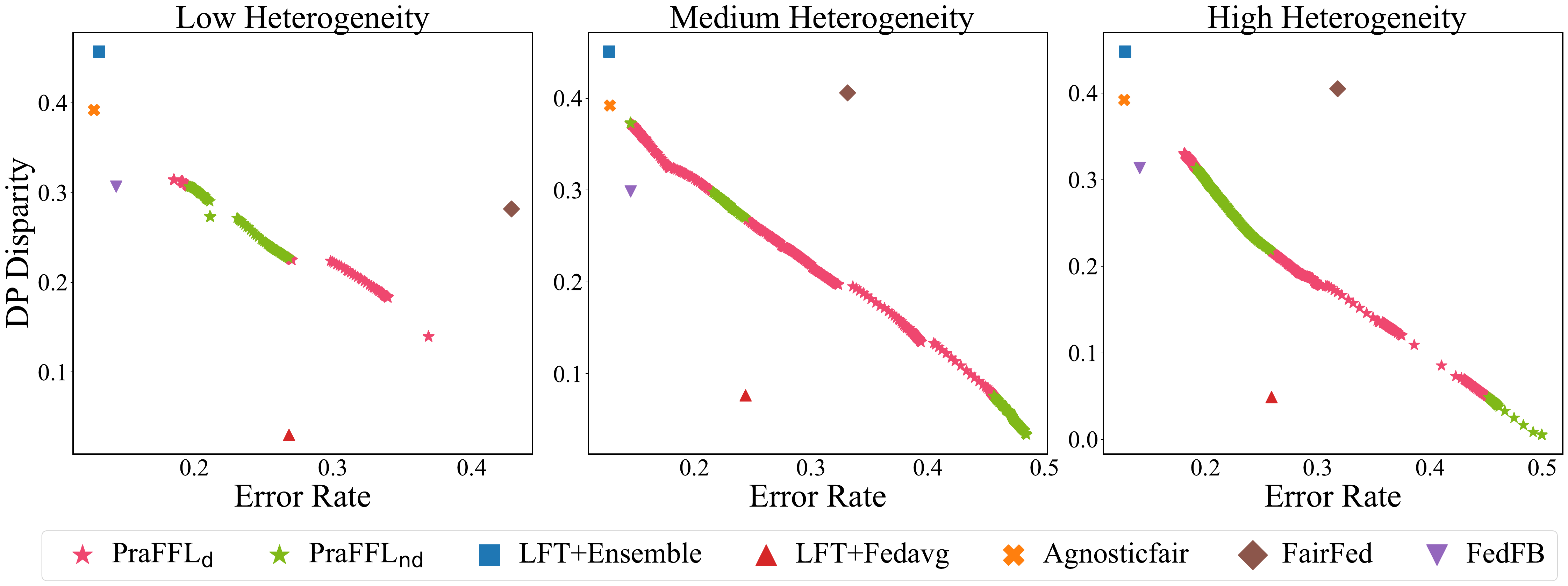}
  \caption{Accuracy-fairness solutions on the SYNTHETIC dataset with three clients having different heterogeneity distributions of the sensitive feature. Left: low heterogeneity with Dirichlet distribution $\text{Diri}(100)$. Middle: medium heterogeneity with with Dirichlet distribution $\text{Diri}(5)$. Right: high heterogeneity with with Dirichlet distribution $\text{Diri}(0.1)$.} \label{hetero}
\end{figure*}

Moreover, Fig. \ref{solutionset} shows the final solutions obtained by the baselines and the non-dominated solutions obtained by PraFFL among 1000 solutions. According to the dominance relationship in Definition \ref{dominated}, we divide the solutions obtained by PraFFL into two categories ($\text{PraFFL}_d$ and $\text{PraFFL}_{nd}$). $\text{PraFFL}_d$ (red stars) are inferior to some baselines in some preferences. The solution set in $\text{PraFFL}_{nd}$ (green stars) is superior to other baselines (it is the best trade-off between fairness and error rate for some specific preferences). Since the solution set of PraFFL covers the optimal trade-offs for most of the client's preferences, the solution set obtained by PraFFL is better than the solution obtained by other algorithms. The advantage of PraFFL is that it can provide arbitrary solutions for clients with only one training. \rev{In addition, the Pareto front obtained by PraFFL on the COMPAS and BANK datasets can be further improved. There are two aspects that can further improve the learned Pareto front: (I) The data distribution of the training and testing set are not exactly the same. Obtaining a superior solution set on the training set may not necessarily lead to a superior solution set on the testing set; (II) Our optimization goals of model performance and fairness are not directly equivalent to optimizing the error rate and DP disparity.} Therefore, these two aspects lead to a gap between the solution set we obtained and the optimal solution set, which motivates future research directions. 

\subsection{Ablation Studies}
\rev{This subsection mainly investigated the effects of data heterogeneity, number of clients, and hyperparameters in PraFFL.}

\subsubsection{The effect of the number of clients\label{moreclient}}
Table \ref{number} shows the results under different numbers of clients on the SYNTHETIC dataset. Two observations can be made: (I) Regardless of the large-scale number of clients, our proposed PraFFL achieves the best local HV and global HV compared to the five baseline algorithms, with PraFFL leading the best algorithm by 2\% to 18\%. This demonstrates that our PraFFL is highly effective in providing preference-specific models for clients in large-scale scenarios. (II) As the number of clients increases, there is a slight decrease in PraFFL's local HV and global HV. Nevertheless, PraFFL still achieves state-of-the-art performance compared with five baselines. This phenomenon could be attributed to the increasing divergence in local Pareto fronts with the growing number of clients, due to the differences in local datasets across clients. As a result, this leads to the emergence of heterogeneous local Pareto fronts.
\subsubsection{The effect of data heterogeneity}\label{heter}
We examine the error rate and DP Disparity obtained by each algorithm under different heterogeneous levels on the sensitive features. As in \cite{zeng2021improving}, we study three heterogeneous scenarios. Fig. \ref{hetero} shows the performance of the algorithms under three different heterogeneity levels. We can observe that our proposed PraFFL are robust than the other five baselines under different heterogeneous levels. Compared with the other five algorithms, the number of solutions of $\text{PraFFL}_{nd}$ is larger, and such solutions are more widely distributed in different heterogeneities. For four baselines, the changes in the values of their two indicators under the three levels of heterogeneity are relatively small. It is worth noting that PraFFL is robust to large-scale user preferences. In other words, PraFFL not only have considerable performance in meeting users' dynamic preferences, but also maintain high stability under different degrees of data heterogeneity. 
\begin{figure}[t]
    \centering
    \includegraphics[width=\linewidth]{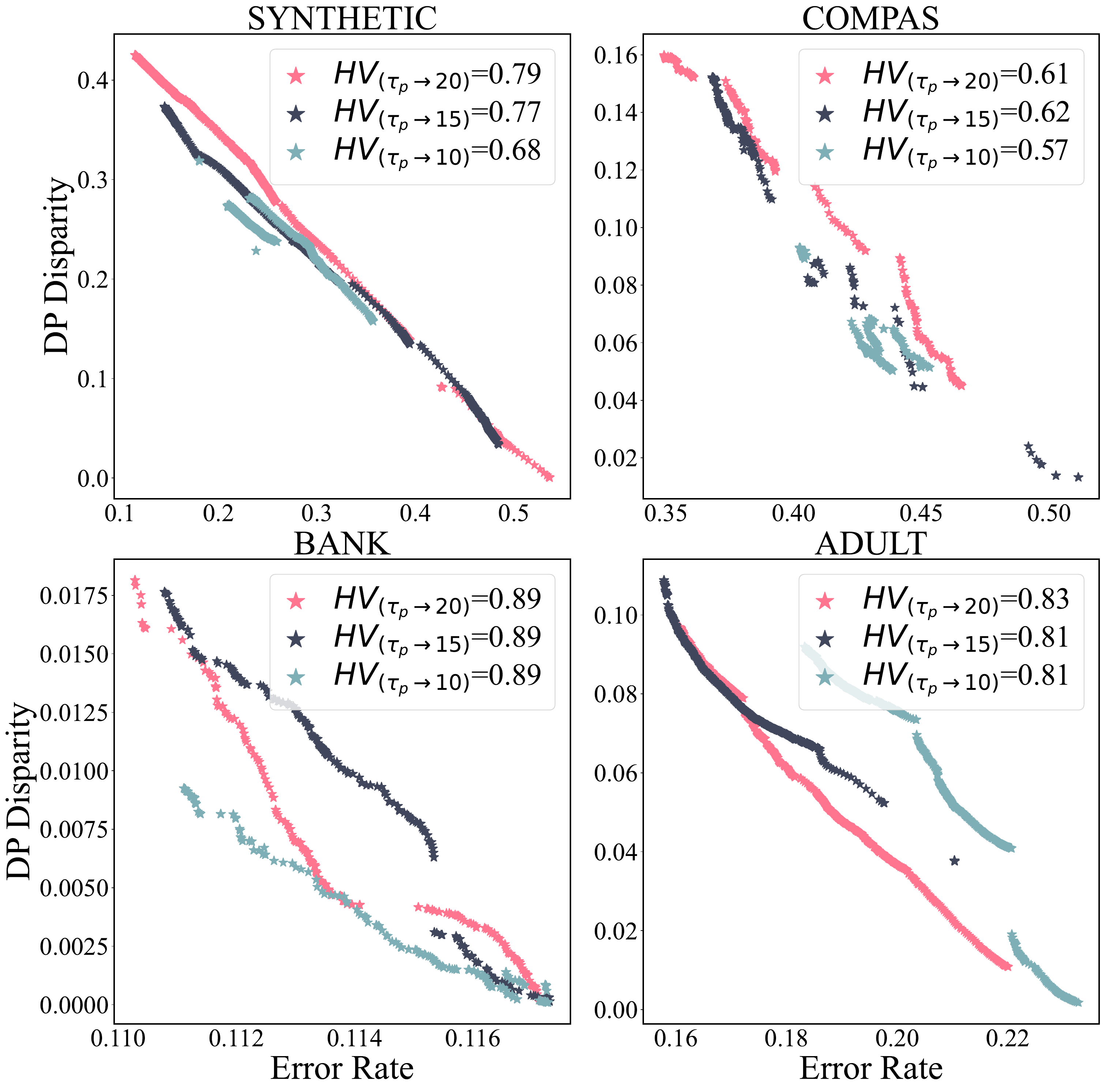}
    \caption{Solutions obtained by PraFFL under different $\tau_{p}$.} 
    \label{localtrain}
\end{figure}
\begin{table}[t]
    \centering
    \setlength{\abovecaptionskip}{0.12cm}
    \setlength{\tabcolsep}{4pt}
    \caption{Performance comparison of different methods across the number of clients on SYNTHETIC dataset.}\label{number}
    \begin{tabular}{lccccccc}
        \toprule
        \multirow{3}{*}{Method} & \multicolumn{3}{c}{Local HV $\uparrow$} & \multicolumn{3}{c}{Global HV $\uparrow$} \\
        \cmidrule(lr){2-4} \cmidrule(lr){5-7}
                & \multicolumn{3}{c}{Number of Clients} & \multicolumn{3}{c}{Number of Clients}\\
        & \multicolumn{1}{c}{10} & \multicolumn{1}{c}{50} & \multicolumn{1}{c}{100} & \multicolumn{1}{c}{10} & \multicolumn{1}{c}{50} & \multicolumn{1}{c}{100} \\
        \midrule
        LFT+Ensemble & 0.475  & 0.599 & 0.589 & 0.459 & 0.475  & 0.464 \\
        LFT+Fedavg & \underline{0.663}  & \underline{0.614} & 0.536 & 0.463 & 0.589  & 0.472 \\
        Agnosticfair & 0.548  & 0.597 & 0.623 & 0.551 & 0.559  & 0.556 \\
        FairFed & 0.415  & 0.394 & 0.388 & 0.372 & 0.309  & 0.329 \\
        FedFB & 0.597  & \underline{0.614} & \underline{0.649} & \underline{0.613} & \underline{0.591}  & \underline{0.589} \\
        PraFFL & \textbf{0.786}  & \textbf{0.695} & \textbf{0.676} & \textbf{0.776} & \textbf{0.661}  & \textbf{0.654} \\
        \bottomrule
    \end{tabular}
\end{table}
\subsubsection{The effect of $\tau_{p}$}
HV is calculated by the solution set and a reference point (Fig. \ref{fig:hv}). HV is affected by two factors: (1) quality of the solution set (i.e., the closer to the origin, the higher HV); and (2) diversity of the solution set (i.e., the broader the solutions distributed, the higher HV). Recall that $\tau_p + \tau_c$ equals the total local training epochs. As shown in Fig. \ref{localtrain}, adjusting the value of $\tau_p$ is a trade-off problem. We can find that
larger $\tau_{p}$ increases the diversity of the solution set, while it leads to a smaller 
$\tau_c$, making the quality of the solution set worse. This is because the training of the communicated model mainly focuses on improving the solution set (Eq. (\ref{communicated})),
and a larger $\tau_{p}$ will allow the personalized model to fully learn the user's preferences. 
Conversely, a smaller $\tau_p$ decreases the diversity but improves the quality of the solution set. 
For instance, when $\tau_{p}=10$ in COMPAS, the convergence results of the solution set are superior, but the diversity of the solution set is inferior.
\begin{figure}[t]
    \centering
    \includegraphics[width=\linewidth]{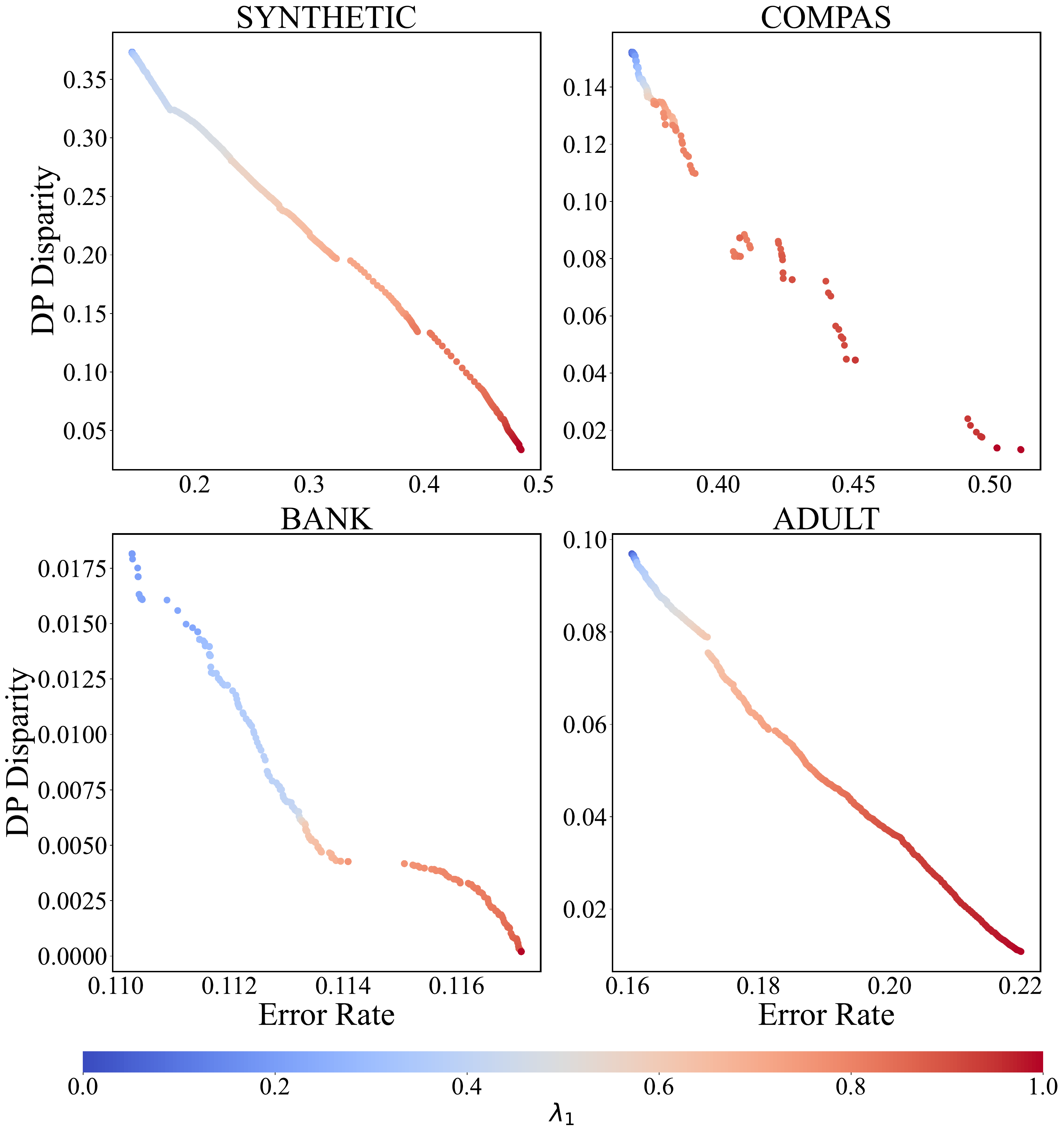}
    \caption{Mapping relationship from different preferences to model performance and fairness.}
    \label{pref}
\end{figure}
\subsubsection{Different preferences of clients}
We analyze the mapping from different preference vectors to their corresponding solution set on four datasets. For the simplicity of verifying the effectiveness of PraFFL, we assume that the preferences of all users are unchanged during inference. In Fig. \ref{pref}, we can find that the solutions corresponding to different preferences are clearly distinguishable. This shows that PraFFL can assign a unique solution to each user's different preferences. \rev{In addition, we observe that the Pareto front obtained by PraFFL on SYNTHETIC, BANK, and ADULT have a wide range, which indicates that PraFFL provide models for various preferences of clients.} The learned Pareto front under COMPAS is slightly worse than that under the other three datasets, which may be due to data quality. In general, PraFFL can meet most of the preference requirements of clients based on the preference vectors they provide.

\section{Conclusion and Future Works}\label{conclusion}
 In this work, we proposed a preference-aware scheme in fair federated learning (PraFFL), which is capable of providing models with different trade-offs to every client in real time. In the training phase, our proposed PraFFL does not require the assumption that the Pareto front is convex and preserves the client's preference privacy. In the inference phase, the model obtained by PraFFL can be adaptively adjusted according to the client's preferences. We theoretically prove that given a client's preference, PraFFL can linearly converge to the Pareto optimal model and learn the entire Pareto front. Experimental results show that the solution set obtained by PraFFL on four datasets is well-performing, and the solutions obtained by PraFFL are the optimal choices for most of the client's preferences compared to five advanced algorithms.

Although our proposed PraFFL can achieve promising results on the four datasets, it still has room for improvement. The first direction is to improve the generalization ability of the learned Pareto front. The distribution of the training and testing dataset is inconsistent. A good solution set on the training dataset may not lead to a superior solution set on the testing dataset. Therefore, the generalization of the learned Pareto front can be improved by regularization or using advanced data sampling strategies. In the second direction, the cross-entropy and fairness loss functions used in our work are not equivalent to the error rate and DP disparity formulations, respectively. To further improve the performance of PraFFL, a possible approach is to approximate the error rate and DP disparity formulations.

\section*{Acknowledgments}
This work was supported by the National Natural Science Foundation of China under Grant 62202214 and Guangdong Basic and Applied Basic Research Foundation under Grant 2023A1515012819. 

\clearpage
\bibliographystyle{ACM-Reference-Format}
\bibliography{sample-base}

\clearpage
\appendix
\section{Appendix}
\subsection{Parameter Settings}
The batch size of preference vectors $n_{\lambda}$ is set to 64. The total communication rounds $T$ is set to 10, and the batch size of data is 128 in the neural network training. We use Adam as optimizer and the learning rate $\eta$ is 0.005. The smooth factor of smooth Tchebycheff function is set to 1. The reference point $\bm{r}$ in calculating hypervolume is set to (1, 1).
\subsection{Datasets}
According to the setting of \cite{zeng2021improving,ezzeldin2023fairfed}, the COMPAS and ADULT datasets have two clients, while the SYNTHETIC and BANK have three clients. In Section \ref{moreclient}, we will conduct experiments on more clients. We split 30\% of the data into the test set, and the other 70\% of the data are split into the training set and validation set with a ratio of 9:1. Furthermore, we select the model that performs best in the validation set for testing.

\subsection{Proofs}
\subsubsection{Proof of Lemma \ref{lemmaconvex}}
\begin{proof} Let $\widetilde{g}(\bm{\phi}_k)$ and $\mathcal{L}_i(\bm{\phi}_k)$ be simply the parameter $\bm{\phi}_k$ optimized by $\widetilde{g}$ and $\mathcal{L}_i$, respectively. Let $z \in [0,1]$, we start from the definition of strong convexity to obtain the strong convexity coefficient of $\widetilde{g}$.
\begin{align}
&\widetilde{g}_{t}(z\bm{\phi}_{k}^{(a)} + (1-z)\bm{\phi}_{k}^{(b)} \mid \lambda)  \\ &= \frac{1}{\gamma} \log\left(\sum_{i\in[3]} e^{\gamma(\mathcal{L}_i(z\bm{\phi}_{k}^{(a)} + (1-z)\bm{\phi}_{k}^{(b)}) ) / \lambda_i}\right) \\
&\leq \frac{1}{\gamma} \log\left(\sum_{i\in[3]} e^{\gamma(z\mathcal{L}_i(\bm{\phi}_{k}^{(a)}) + (1-z)\mathcal{L}_i(\bm{\phi}_{k}^{(b)}) ) / \lambda_i}\right) \\
&\quad - \log\left(\sum_{i\in[3]} e^{\frac{\mu_{i}}{2\lambda_{i}}}\right) \cdot z(1-z) ||\bm{\phi}_{k}^{(a)} - \bm{\phi}_{k}^{(b)}|| \\
&= \frac{z}{\gamma} \log\left(\sum_{i\in[3]} e^{\frac{\gamma\mathcal{L}_i(\bm{\phi}_{k,a})}{\lambda_i}}\right) + \frac{1-z}{\gamma} \log\left(\sum_{i\in[3]} e^{\frac{\gamma\mathcal{L}_i(\bm{\phi}_{k,b})}{\lambda_i}}\right) \\
&\quad - \log\left(\sum_{i\in[3]} e^{\frac{\mu_{i}}{2\lambda_{i}}}\right) \cdot z(1-z) ||\bm{\phi}_{k}^{(a)} - \bm{\phi}_{k}^{(b)}||.
\end{align}

\end{proof}

\subsubsection{Proof of Lemma \ref{lemmasmooth}}

\begin{proof}
    Given loss function $\mathcal{L}_{i}(\bm{x},y,\bm{\theta}(\bm{\lambda}))$ is $L_i$-smooth for $i=1,2$ and $\bm{\lambda} \in \Lambda$, we now prove that the smooth Tchebycheff function $\widetilde{g}(\bm{x},y,\bm{\theta}(\bm{\lambda})\mid \bm{\lambda})$ is $\frac{L_i}{\lambda}_i$-smooth for both the personalized model $\bm{\phi}_k$ and the communicated model $\bm{\psi}_k$ for $k=1,...,K$. We take $\bm{\phi}_k$ as an example, and let $\widetilde{g}(\bm{\phi}_k)$ and $\mathcal{L}_i(\bm{\phi}_k)$ be simply the parameter $\bm{\phi}_k$ optimized by $\widetilde{g}$ and $\mathcal{L}_i$, respectively. We start from the definition of smoothness to obtain the smooth coefficient of $\widetilde{g}$. Due to the chain rule $\nabla \widetilde{g}(\bm{\phi}_{k})= \sum_{i \in [2]}\nabla \mathcal{L}_{i}(\bm{\phi}_{k}) \nabla_{\mathcal{L}_i} \widetilde{g}(\bm{\phi}_{k})$, we can calculate $\nabla_{\mathcal{L}_i} \widetilde{g}(\bm{\phi}_{k})=\sum_{i\in[2]}\frac{e^{\mathcal{L}_{i}(\bm{\phi}_{k})}}{\lambda_{i}\sum_{i\in[2]}e^{\mathcal{L}_{i}(\bm{\phi}_{k})}}$. After that, we start from the following equation:
\begin{align}
        &\|\nabla \widetilde{g}(\bm{\phi}_{k}^{(a)})-\nabla \widetilde{g}(\bm{\phi}_{k}^{(b)}) \| \nonumber
        \\ 
        \vspace{-5cm}
        = & \| \sum_{i \in [2]} \! \nabla \mathcal{L}_{i}(\bm{\phi}_{k}^{(a)}) \nabla_{\mathcal{L}_i} \widetilde{g}(\bm{\phi}_{k}^{(a)})\! - \!\nabla \mathcal{L}_{i}(\bm{\phi}_{k}^{(b)}) \nabla_{\mathcal{L}_i} \widetilde{g}(\bm{\phi}_{k}^{(b)}) \! \|
\end{align}
\begin{align}
        = &\|\sum_{i\in[2]}\frac{e^{\mathcal{L}_{i}(\bm{\phi}_{k}^{(a)})}}{\lambda_{i}\sum_{i\in[2]}e^{\mathcal{L}_{i}(\bm{\phi}_{k}^{(a)})}}\nabla \mathcal{L}_{i}(\bm{\phi}_{k}^{(a)})\nonumber \\
        &-\sum_{i\in[2]}\frac{e^{\mathcal{L}_{i}(\bm{\phi}_{k}^{(b)})}}{\lambda_{i}\sum_{i\in[2]}e^{\mathcal{L}_{i}(\bm{\phi}_{k}^{(b)})}}\nabla \mathcal{L}_{i}(\bm{\phi}_{k}^{(b)})\| 
        \\
        \leq& \sum_{i\in[2]}\frac{1}{\lambda_{i}}  \|\nabla L_{i}(\bm{\phi}_{k}^{(a)})-\nabla L_{i}(\bm{\phi}_{k}^{(b)})\|\\ 
        \leq& \sum_{i\in[2]}\frac{L_{i}}{\lambda_{i}} \|\bm{\phi}_{k}^{(a)}-\bm{\phi}_{k}^{(b)}\|^{2}.
\end{align}

The last step is due to the smoothness of $\mathcal{L}_{i}$. Then, we can also get the smoothness coefficient of $\widetilde{g}(\bm{\phi})$,
\begin{equation}
    \|\nabla \widetilde{g}(\bm{\phi}^{(a)})-\nabla \widetilde{g}(\bm{\phi}^{(b)}) \| \leq \sum_{i\in[2]}\frac{L_{i}}{\lambda_{i}} \|\bm{\phi}^{(a)}-\bm{\phi}^{(b)}\|^{2}.
\end{equation}
\end{proof}
\subsubsection{Proof of Lemma \ref{exist}}
\begin{proof}
     Suppose that none of the optimal solution (i.e., $\bm{\phi}_{k}$) in problem (\ref{bii}) is Pareto optimal. Let ${\bm{\theta}^{*}_k}(\bm{\lambda})\in \Phi$ be an optimal solution to problem (\ref{bii}). Since we assume that it is not Pareto optimal, there must exist a model $\bm{\theta}^{'}_{k}(\bm{\lambda}) \in \Phi$ which is not optimal for problem (\ref{bii}) but for which $\mathcal{L}_i(\bm{x},y, \bm{\theta}^{'}_{k}(\bm{\lambda}))\leq \mathcal{L}_i(\bm{x},y,\bm{\theta}^{*}_{k}(\bm{\lambda}))$, $\forall i \in \{1, 2\}$ and $\mathcal{L}_i(\bm{x},y,\bm{\theta}^{'}_{k}(\bm{\lambda})) < \mathcal{L}_i(\bm{x},y,\bm{\theta}^{*}_{k}(\bm{\lambda})$), $\exists i \in \{1, 2\}$.
Therefore, we have
\begin{equation}
\!\! \mathcal{L}_i(\bm{x},y,\bm{\theta}^{'}_{k}(\bm{\lambda}))\leq \mathcal{L}_i(\bm{x},y,\bm{\theta}^{*}_{k}(\bm{\lambda})) , \forall i \in \{1, 2\}.
\end{equation}

We substitute this inequality into the smooth Tchebycheff function and get the following inequality
\begin{equation}\label{tch_inqu}
\frac{1}{\gamma} \text{log}\sum_{i\in[2]}e^{  \frac{\mathcal{L}_i(\bm{x},y,\bm{\theta}^{'}_{k}(\bm{\lambda}))}{\lambda_i}}  < \frac{1}{\gamma} \text{log}\sum_{i\in[2]}e^{  \frac{\mathcal{L}_i(\bm{x},y,\bm{\theta}^{*}_{k}(\bm{\lambda}))}{\lambda_i}}.
\end{equation}

This means $\bm{\theta}^{'}_{k}$ is optimal of problem (\ref{bii}), but we suppose $\bm{\theta}^{*}_{k}$ is optimal of problem (\ref{bii}). This contradiction completes the proof.
\end{proof}

\subsubsection{Proof of Lemma \ref{optima}}
\begin{proof}
We suppose that ${\bm{\theta}_{k}}^*(\bm{\lambda})$ is not Pareto optimal. By the definition of Pareto optimality (Definition \ref{paretoop}), there exists another model ${\bm{\theta}^{'}_{k}}(\bm{\lambda}) \in \Theta$ such that $\bm{\mathcal{L}}(\bm{x}, y, \bm{\theta}^{'}_{k}) \prec \bm{\mathcal{L}}(\bm{x}, y, \bm{\theta}^{*}_{k}) $. Therefore, we have the following inequality
\begin{equation}\label{dominate}
   \mathcal{L}_{i}(\bm{x},y, \bm{\theta}^{'}_{k}(\bm{\lambda})) \leq \mathcal{L}_{i}(\bm{x},y, \bm{\theta}^{*}_{k}(\bm{\lambda})), \ \forall i \in \{1, 2\}.
\end{equation}

We also can obtain inequality (\ref{tch_inqu}), which contradicts the assumption, so $\bm{\theta}_{k}^{*}(\bm{\lambda})$ is Pareto optimal for problem (\ref{bii}).

\end{proof}

\subsubsection{Proof of Theorem \ref{pf}}

\begin{proof}
Given a preference vector $\bm{\lambda} \triangleq ({\lambda_{i}>0}, i \in \{1, 2\})$, let $\bm{\theta}^{*}_{k}$ is defined as follows: 
\begin{equation}
\boldsymbol{\theta}_{k}^*(\bm{\lambda}) \triangleq \arg\min_{\boldsymbol{\theta}_{k}\in \Theta} \widetilde{g} (\bm{x},y, {\bm{\theta}_k}(\bm{\lambda}) \mid \bm{\lambda}).
\end{equation}

    We suppose that $\bm{\theta}_{k}^*(\bm{\lambda})$ is not Pareto optimal. By the definition of Pareto optimality (Definition 2.2), there exists another personalized model ${\bm{\theta}^{'}_{k}}(\bm{\lambda}) \in \Phi$ such that $\bm{\mathcal{L}}(\bm{x},y, \bm{\theta}^{'}_{k}(\bm{\lambda})) \prec \bm{\mathcal{L}}(\bm{x},y, \bm{\theta}^{*}_{k}(\bm{\lambda})) $. Therefore, we have the following inequality:
\begin{equation}\label{dominate}
   \mathcal{L}_{i}(\bm{x},y, \bm{\theta}^{'}_{k}(\bm{\lambda})) \leq \mathcal{L}_{i}(\bm{x},y, \bm{\theta}^{*}_{k}(\bm{\lambda})), \ \forall i \in \{1, 2\}.
\end{equation}

We also can obtain inequality (\ref{tch_inqu}), which contradicts the assumption, so $\bm{\theta}_{k}^{*}$ is Pareto optimal for problem (\ref{bii}).
\end{proof}

\subsubsection{Proof of Proposition \ref{relation}}
\begin{proof}
    We now prove an upper bound of the difference between the personalized model on any client (take client $k$ as an example) $\bm{\phi}_k$ and the optimal personalized model $\bm{\phi}_k^{*}$. Let $I^{t}_{k}$ indicate if user $k$ is selected at the $t$-round, and $\mathbb{E}\left[I^{t}_{k}\right]=p_k$. We have 
\begin{align}
\mathbb{E}&\left[\|\bm{\phi}_k^{t+1}-\bm{\phi}_k^*\|^2\right]& \nonumber \\= & \ \mathbb{E}\left[\|\bm{\phi}_k^t-\eta I^{t}_{k}\nabla \widetilde{g}(\bm{\phi}_k^t;\bm{\psi}^t)-\bm{\phi}_k^*\|^2\right] \\
=& \ \mathbb{E}\left[\|\bm{\phi}_k^t-\bm{\phi}_k^*\|^2\right]+\eta^2\mathbb{E}[\left\| I^{t}_{k} \widetilde{g}(\bm{\phi}_k^t;\bm{\psi}^t)\|^2\right]\nonumber\\
&+2\eta\mathbb{E}\left[p_{k}{\nabla \widetilde{g}(\bm{\phi}_k^t;\bm{\psi}^t)}^{T}(\bm{\phi}_k^*-\bm{\phi}_k^t)\right] \\
\overset{(a)}{\leq}&(1-\eta p_k\check{\mu})\mathbb{E}\left[\|\bm{\phi}_k^t-\bm{\phi}_k^*\|^2\right]\nonumber+\underbrace{\eta^2\mathbb{E}\left[\|\nabla \widetilde{g}(\bm{\phi}_k^t;\bm{\psi}^t)\|^2\right]}_{\mathcal{C}_{1}} \nonumber\\
&+\underbrace{2\eta p_k\mathbb{E}\left[\widetilde{g}(\bm{\phi}_k^*;\bm{\psi}^t)-\widetilde{g}(\bm{\phi}_k^t;\bm{\psi}^t)\right]}_{\mathcal{C}_{2}},\label{total}
\end{align}
where $(a)$ is due to the smoothness of $\widetilde{g}$. We then find the upper bounds of $\mathcal{C}_1$ and $\mathcal{C}_2$, respectively. For $\mathcal{C}_1$,
\begin{align} 
\eta^{2}&\mathbb{E}\left[\|\nabla \widetilde{g}(\bm{\phi}_{k}^{t};\bm{\psi}^t)\|^{2}\right] \nonumber\\ =&\eta^{2}\mathbb{E}\left[\|\nabla \widetilde{g}(\bm{\phi}_{k}^{t};\bm{\psi}^t)-\nabla \widetilde{g}(\bm{\phi}_{k}^{t};\bm{\psi}^*) +\nabla \widetilde{g}(\bm{\phi}_{k}^{t};\bm{\psi}^*)\|^{2}\right]\\
=&2\eta^{2}\mathbb{E}\left[\|\nabla \widetilde{g}(\bm{\phi}_{k}^{t};\bm{\psi}^*)\|  \|\nabla \widetilde{g}(\bm{\phi}_{k}^{t};\bm{\psi}^t)-\nabla \widetilde{g}(\bm{\phi}_{k}^{t};\bm{\psi}^*)\|\right] \nonumber\\
&\!+\!\eta^{2}\mathbb{E}\left[\|\nabla \widetilde{g}(\bm{\phi}_{k}^{t};\bm{\psi}^*)\|^{2}\right]\!\!+\!\eta^{2}\mathbb{E}\left[\|\nabla \widetilde{g}(\bm{\phi}_{k}^{t};\bm{\psi}^t)\!-\!\!\nabla \widetilde{g}(\bm{\phi}_{k}^{t};\bm{\psi}^*)\|^{2}\right]\\
\overset{(b)}{\leq}&\eta^{2}\mathbb{E}\left[\|\nabla \widetilde{g}(\bm{\phi}_{k}^{t};\bm{\psi}^*)\|^{2}\right]\!\!+\!2\eta^{2} \check{L}\mathbb{E}\left[\|\nabla \widetilde{g}(\bm{\phi}_{k}^{t};\bm{\psi}^*)\| \|\bm{\psi}^t-\bm{\psi}^*\|\right] \nonumber\\
&+\eta^{2}\check{L}^2\mathbb{E}\left[\|\bm{\psi}^t-\bm{\psi}^*\|^{2}\right],\label{up1}
\end{align}
where $(b)$ is due to the convexity of $\widetilde{g}$. Then, we continue to find the upper bound of $\mathcal{C}_{2}$,
\begin{align}
2\eta &p_k\mathbb{E}\left[\widetilde{g}(\bm{\phi}_k^*;\bm{\psi}^t)-\widetilde{g}(\bm{\phi}_k^t;\bm{\psi}^t)\right] \nonumber\\ \overset{(c)}{\leq}2&\eta p_k\mathbb{E}\left[\widetilde{g}(\bm{\phi}_k^*;\bm{\psi}^t)-\widetilde{g}(\bm{\phi}_k^t;\bm{\psi}^t) + \eta p_k(\check{L}-\check{\mu})\|\bm{\psi}^t-\bm{\psi}^*\|^2\right]\nonumber\\
+&\mathbb{E}\left[(\nabla \widetilde{g}(\bm{\phi}_k^*;\bm{\psi}^t)-\nabla \widetilde{g}(\bm{\phi}_k^t;\bm{\psi}^t))^{\mathsf{T}}(\bm{\psi}^t-\bm{\psi}^*)\right] 
\\
\overset{(d)}{\leq}2&\eta p_k\mathbb{E}\left[\widetilde{g}(\bm{\phi}_k^*;\bm{\psi}^t)-\widetilde{g}(\bm{\phi}_k^t;\bm{\psi}^t) + \eta p_k(\check{L}-\check{\mu})\|\bm{\psi}^t-\bm{\psi}^*\|^2\right]\nonumber\\
+ &2\eta p_{k}\check{L}\mathbb{E}\left[\|\bm{\phi}_k^t-\bm{\phi}_k^*\| \|\bm{\psi}^t-\bm{\psi}^*\|\right],\label{up2} 
\end{align}
where $(c)$ uses the smoothness and convexity of $\widetilde{g}$. $(d)$ uses the convexity of $\widetilde{g}$. So far, we have found the upper bounds of $\mathcal{C}_{1}$ and $\mathcal{C}_{2}$. We then plugging their upper bounds Eq. (\ref{up1}) and Eq. (\ref{up2}) into Eq. (\ref{total}), as follows:
\begin{align}
\mathbb{E}&\left[\|\bm{\phi}_k^{t+1}-\bm{\phi}_k^*\|^2\right]\nonumber\\ \leq
&(1-\eta p_k \check{\mu})\mathbb{E}\left[\|\bm{\phi}_k^t-\bm{\phi}_k^*\|^2\right]+
\eta^{2}\mathbb{E}\left[\|\nabla \widetilde{g}(\bm{\phi}_{k}^{t};\bm{\psi}^*)\|^{2}\right]\nonumber\\
&+2\eta^{2} \check{L}\mathbb{E}\left[\|\nabla \widetilde{g}(\bm{\phi}_{k}^{t};\bm{\psi}^*)\| \|\bm{\psi}^t-\bm{\psi}^*\|\right] \nonumber\\
&+\eta^{2}\check{L}^2\mathbb{E}\left[\|\bm{\psi}^t-\bm{\psi}^*\|^{2}\right]+2\eta p_k\mathbb{E}\big[\widetilde{g}(\bm{\phi}_k^*;\bm{\psi}^t)-\widetilde{g}(\bm{\phi}_k^t;\bm{\psi}^t) \nonumber\\
&+2\eta p_{k}\check{L} \mathbb{E} \|\bm{\phi}_k^t-\bm{\phi}_k^*\| \|\bm{\psi}^t-\bm{\psi}^*\|\big]\nonumber\\
&+ \eta p_k(\check{L}- \check{\mu})\mathbb{E}\left[\|\bm{\psi}^t\!-\!\bm{\psi}^*\|^2\right]  \\ \overset{(e)}{\leq}
&(1-\eta p_k \check{\mu})\mathbb{E}\left[\|\bm{\phi}_k^t-\bm{\phi}_k^*\|^2\right]+
\eta^{2}\mathbb{E}\left[\|\nabla \widetilde{g}(\bm{\phi}_{k}^{t};\bm{\psi}^*)\|^{2}\right]\nonumber\\
&+2\eta^{2} \check{L}\sqrt{\mathbb{E}\left[\|\nabla \widetilde{g}(\bm{\phi}_{k}^{t};\bm{\psi}^*)\right\|^2 \mathbb{E}\left[\|\bm{\psi}^t\!-\! \bm{\psi}^*\|^2\right]} \nonumber\\
&+\eta^{2}\check{L}^2\mathbb{E}\left[\|\bm{\psi}^t-\bm{\psi}^*\|^{2}\right] +2\eta p_{k}\check{L}\sqrt{\mathbb{E}\|\bm{\phi}_k^t-\bm{\phi}_k^*\|^2 \mathbb{E} \|\bm{\psi}^t-\bm{\psi}^*\|^2}\nonumber\\
&+ \eta p_k(\check{L}- \check{\mu})\mathbb{E}\left[\|\bm{\psi}^t\!-\!\bm{\psi}^*\|^2\right]  \\
\overset{(f)}{\leq} &(1\!-\!\eta p_k\check{\mu})\mathbb{E}\left[\|\bm{\phi}_k^t\!-\!\bm{\phi}_k^*\|^2\right] \!+\! \eta^{2}G^{2}\!+\!2\eta^2 G\check{L} \sqrt{\mathbb{E}[\|\bm{\psi}^t-\bm{\psi}^*\|^{2}}] \nonumber\\ &+(\eta^{2}\check{L}^2+ \eta p_k(\check{L}-\check{\mu}))\mathbb{E}\left[\|\bm{\psi}^t-\bm{\psi}^*\|^{2}\right] \nonumber\\
&+2\eta p_k\check{L}\sqrt{\mathbb{E}\left[\|\bm{\phi}_k^t-\bm{\phi}_k^*\|^2\right] \mathbb{E} \left[\|\bm{\psi}^t-\bm{\psi}^*\|^{2}\right]}, 
\end{align}
where $(e)$ is due to the Cauchy-Schwarz inequality and $\mathbb{E}\big[\widetilde{g}(\bm{\phi}_k^*;\bm{\psi}^t)-\widetilde{g}(\bm{\phi}_k^t;\bm{\psi}^t)\big]\leq 0$. $(f)$ holds because the norm of the squared gradient is bounded by $G^2$ (Assumption \ref{bound_g}).
\end{proof}

\subsubsection{Proof of Theorem \ref{rela}}
\begin{proof}
There exists $C<\infty$ such that for any client $k\in [K]$, $C>\frac{\mathbb{E}\left[\|\bm{\phi}_k^0-\bm{\phi}_k^*\|^2\right]}{z(0)}$, we have
\begin{equation}
     \mathbb{E}\left[\|\bm{\phi}_k^0-\bm{\phi}_k^*\|^2\right]\leq Cz(0).
\end{equation}  
If $\mathbb{E}\left[\|\bm{\phi}_k^t-\bm{\phi}_k^*\|^2\right]\leq Cz(t)$ holds, then for $t+1$,
    \begin{align}
\mathbb{E}&\left[||\bm{\phi}_k^{t+1}-\bm{\phi}_k^*||^2\right] \nonumber\\ \leq&(1-\frac{2z(t)}{A})Cz(t)+\frac{z(t)^{2}}{A}\frac{4\sqrt{C}}{\check{\mu}} + \frac{z(t)^{2}}{A} \frac{2(\check{L}-\check{\mu})}{\check{\mu}} \nonumber\\ &+\frac{z(t)^{2}}{A}\frac{4}{Ap_{k}^{2} \check{\mu}^2}(G^{2}+\check{L}^2z(t)+2G\check{L} \sqrt{z(t)})\\
\overset{(a)}{\leq}&(1-\frac{2z(t)}A)Cz(t)+\frac{Cz(t)^2}A\\
=&(1-\frac{z(t)}{A})Cz(t) \\
\overset{(b)}{\leq} &Cz(t+1) \\
\overset{(c)}{\leq}& C \cdot O(\log(\frac{1}{t+1})),
\label{perconv}
\end{align}
where $(a)$ holds because $C$ is a sufficiently large number such that coefficient of $\frac{z(t)^2}{A}$ is smaller that $C$. $(b)$ is due to the condition $\frac{z(t+1)}{z(t)}\geq1-\frac{z(t)}{A}$. $(c)$ is due to the Lemma \ref{commconv}.  
\end{proof}


\end{document}